\newcommand{\M}{\mathcal{M}}
\newcommand{\RR}{\mathcal{R}}
\newcommand{\A}{\mathcal{A}}
\newcommand{\poly}{\text{poly}}
\newtheorem{theorem}{Theorem}[section]
\newtheorem{lemma}{Lemma}[section]
\newtheorem{corollary}{Corollary}[section]
\newtheorem{assumption}{Assumption}[section]
\theoremstyle{definition}
\newtheorem{remark}{Remark}[section]
\newtheorem*{lemma*}{Lemma}
\newcommand{\wen}[1]{{\color{blue}[\textbf{WS:} #1]}}
\renewcommand{\S}{\mathcal{S}}
\renewcommand\ind[1]{\ensuremath{\mathds{1}\left[#1\right]}}
\newcommand{\interior}[1]{%
	{\kern0pt#1}^{\mathrm{o}}%
}
\newcommand{\Scal}{\mathcal{S}}
\newcommand{\Acal}{\mathcal{A}}
\newcommand{\EE}{\ensuremath{\mathbb{E}}}
\newcommand{\Bb}{\mathbb{B}}
\newcommand{\subopt}{\mbox{SubOpt}}
\title{Corruption-Robust Offline Reinforcement Learning}
\author{
 Xuezhou Zhang\\
UW-Madison\\
\texttt{xzhang784@wisc.edu}
\And
Yiding Chen\\
UW-Madison\\
\texttt{ychen695@wisc.edu}
\And
Jerry Zhu\\
UW-Madison\\
\texttt{jerryzhu@cs.wisc.edu}
\And
Wen Sun\\
Cornell University\\
\texttt{ws455@cornell.edu}
}
\begin{document}
\maketitle
\begin{abstract}
We study the adversarial robustness in offline reinforcement learning. Given a batch dataset consisting of tuples $(s, a, r, s')$, an adversary is allowed to arbitrarily modify $\epsilon$ fraction of the tuples.
From the corrupted dataset the learner aims to robustly identify a near-optimal policy. 
We first show that a worst-case $\Omega(d\epsilon)$ optimality gap is unavoidable in linear MDP of dimension $d$, even if the adversary only corrupts the reward element in a tuple. 
This contrasts with dimension-free results in robust supervised learning and best-known lower-bound in the online RL setting with corruption. 
Next, we propose robust variants of the Least-Square Value Iteration (LSVI) algorithm utilizing robust supervised learning oracles, which achieve near-matching performances in cases both with and without full data coverage. 
The algorithm requires the knowledge of $\epsilon$ to design the pessimism bonus in the no-coverage case. Surprisingly, in this case, the knowledge of $\epsilon$ is necessary, as we show that being adaptive to unknown $\epsilon$ is impossible.
This again contrasts with recent results on corruption-robust online RL and implies that robust offline RL is a strictly harder problem. 
	\end{abstract}

	\section{Introduction}
	Offline Reinforcement Learning (RL) \citep{lange2012batch, levine2020offline} has received increasing attention recently due to its appealing property of avoiding online experimentation and making use of offline historical data. In applications such as assistive medical diagnosis and autonomous driving, historical data is abundant and keeps getting generated by high-performing policies (from human doctors/drivers). However, it is expensive to allow an online RL algorithm to take over and start experimenting with potentially suboptimal policies, as often human lives are at stake. Offline RL provides a powerful framework where the learner aims at finding the optimal policy based on historical data alone. Exciting advances have been made in designing stable and high-performing empirical offline RL algorithms \citep{fujimoto2019off, laroche2019safe, wu2019behavior, kumar2019stabilizing, kumar2020conservative, agarwal2020optimistic, kidambi2020morel, siegel2020keep, liu2020provably, yang2021representation, yu2021combo}. On the theoretical front, recent works have proposed efficient algorithms with theoretical guarantees, based on the principle of \textit{pessimism in face of uncertainty} \citep{liu2020provably, buckman2020importance, yu2020mopo, jin2020pessimism, rashidinejad2021bridging}, or variance reduction \citep{yin2020near, yin2021near}. Interesting readers are encouraged to check out these works and the references therein.
	
	In this work, however, we investigate a different aspect of the offline RL framework, namely the statistical robustness in the presence of data corruption. Data corruption is one of the main security threats against modern ML systems: autonomous vehicles can misread traffic signs contaminated by adversarial stickers \citep{eykholt2018robust}; chatbots were misguided by tweeter users to make misogynistic and racist remarks \citep{neff2016talking}; recommendation systems are fooled by fake reviews/comments to produce incorrect rankings. Despite the many vulnerabilities, robustness against data corruption has not been extensively studied in RL until recently. 
	To the best of our knowledge, \textit{all} prior works on corruption-robust RL study the online RL setting. As direct extensions to adversarial bandits, earlier works focus on designing robust algorithms against \textit{oblivious} reward contamination, i.e. the adversary must commit to a reward function before the learner perform an action, and show that $O(\sqrt{T})$ regret is achievable \citep{even2009online, neu2010online, neu2012adversarial, zimin2013online, rosenberg2019online, jin2020learning}. 
	Recent works start to consider contamination in both rewards and transitions \citep{lykouris2019corruption, chen2021improved}. 
	However, this setting turns out to be significantly harder, and both works can only tolerate at most $\epsilon\leq O(1/\sqrt{T})$ fraction of corruptions even against oblivious adversaries. 
	The work most related to ours is \citep{zhang2021robust}, which shows that the natural policy gradient (NPG) algorithm can be robust against a constant fraction (i.e. $\epsilon\geq \Omega(1)$) of \textit{adaptive} corruption on both rewards and transitions, albeit requiring the help of an exploration policy with finite relative condition number. It remains unknown whether there exist algorithms that are robust against a constant fraction of adaptive corruption without the help of such exploration policies.
	
	In the batch learning setting, existing works mostly come from the robust statistics community and focuses on statistical estimation and lately supervised and unsupervised learning. 
	We refer interesting readers to a comprehensive survey \citep{diakonikolas2019recent} of recent advances along these directions. 
	In robust statistics, a prevailing problem setting is to perform a statistical estimation, e.g. mean estimation of an unknown distribution, assuming that a small fraction of the collected data is arbitrarily corrupted by an adversary. 
	This is also referred to as the \textit{Huber's contamination model} \citep{huber1967behavior}. Motivated by these prior works, in this paper we ask the following question:

		\textit{Given an offline RL dataset with $\epsilon$-fraction of corrupted data, what is the information-theoretic limit of robust identification of the optimal policy?}
		
	Towards answering this question, we summarize the following contributions of this work:
	\begin{enumerate}[leftmargin=*]
		\item We provide the formal definition of $\epsilon$-contamination model in offline RL, and establish an information-theoretical lower-bound of $\Omega(Hd\epsilon)$ in the setting of linear MDP with dimension $d$.
		\item We design a robust variant of the Least-Square Value Iteration (LSVI) algorithm utilizing robust supervised learning oracles with a novel pessimism bonus term, and show that it achieves near-optimal performance in cases with (Theorem \ref{thm:unif_cov}) or without data coverage (Theorem \ref{thm:rcn}).
		\item In the without coverage case, we establish a sufficient condition for learning based on the relative condition number with respect to any comparator policy --- not necessary the optimal one.  When specialized to offline RL without corruption, our partial coverage assumption is much weaker than the full coverage assumption in \citep{jin2020pessimism} for linear MDP.
		\item In contrast to existing robust online RL results, we show that agnostic learning, i.e. learning without the knowledge of $\epsilon$, is generally impossible in the offline RL setting, establishing a separation in hardness between online and offline RL in face of data corruption.
	\end{enumerate}
While our paper's main contributions are on corruption robust offline RL, it is worth noting when specialized to the classic offline RL setting, i.e., $\epsilon = 0$, our work also gives two interesting new results: (1) under linear MDP setting, we achieve an optimality gap with respect to any comparator policy (not necessarily the optimal one) in the order of $O(d^{3/2} / \sqrt{N})$ with $N$ being the number of offline samples, by simply randomly splitting the dataset (this does sacrifices $H$ dependence), (2) our analysis works for the setting where offline data only has partial coverage which is formalized using the concept of relative condition number with respect to the comparator policy.

	\section{Preliminaries}
	To begin with, let us formally introduce the episodic linear MDP setup we will be working with, the data collection and contamination protocol, as well as the robust linear regression oracle.
	\paragraph{Environment.} We consider an episodic finite-horizon Markov decision process (MDP), $\M(\S, \A, P, R, H, \mu_0)$, where $\S$ is the
	state space, $\A$ is the action space,
	$P:\S\times\A\rightarrow \Delta(\S)$ is the transition function, such that $P(\cdot|s, a)$ gives the distribution over the next state if action
	$a$ is taken from state $s$, $R: \S\times\A\rightarrow \Delta(\R)$ is a stochastic and potentially unbounded reward function, $H$ is the time horizon, and $\mu_0\in\Delta_\S$ is an initial state distribution. 
	The value functions $V_h^\pi: \S \to \R$ is
	 the expected sum of future rewards, starting at time $h$ in state $s$
	and executing $\pi$, i.e.
	$
	V_h^\pi(s) := \EE \left[\sum_{t=h}^H R(s_t, a_t)
	| \pi, s_0 = s\right],
	$
	where the expectation is taken with respect to the randomness of the policy and environment $\M$.
	Similarly, the \emph{state-action} value function $Q_h^\pi: \S
	\times \A \to \R$
	is defined as
	$
	Q_h^\pi(s,a) := \EE\left[\sum_{t=h}^H R(s_t, a_t) | \pi,
	s_0 = s, a_0 = a \right].
	$
	We use $\pi_h^*$, $Q_h^*$, $V_h^*$ to denote the optimal policy, Q-function and value function, respectively.
	For any function $f:\S\rightarrow\R$, 
	we define the Bellman operator as
	\begin{equation}
		(\Bb f)(s,a) = \EE_{s'\sim P(\cdot \mid s,a)}[R(s,a)+f(s')].
	\end{equation}
	We then have the Bellman equation
	\begin{equation}
		V^\pi_h(s) = \langle Q^\pi_h(s,\cdot),\pi_h(\cdot|s)\rangle_\A,\;  Q^\pi_h(s,a) = (\Bb V_{h+1}^{\pi})(s,a)\nonumber
	\end{equation}
	and the Bellman optimality equation
	\begin{equation}
		V_h^*(s) = \max_a Q_h^*(s,a), \quad Q_h^*(s,a) =(\Bb V_{h+1}^*)(s,a)\nonumber
	\end{equation}

	We define the averaged state-action
	distribution $d^\pi$ of a policy $\pi$:
	$
	d^\pi(s,a) := \frac{1}{H}\sum_{h=1}^H {\Pr}^\pi(s_t=s,a_t=a|s_0\sim\mu_0)
	$
	.
	We aim to learn a policy that maximizes the expected cumulative reward and thus define the performance metric as the suboptimality of the learned policy $\pi$ compared to a \textit{comparator policy} $\tilde\pi$:
	\begin{equation}
		\subopt(\pi,\tilde\pi) = \EE_{s\sim \mu_0}[V^{\tilde\pi}_1(s)-V^{\pi}_1(s)].
	\end{equation}
	Notice that $\tilde\pi$ doesn't necessarily have to be the optimal policy $\pi^*$, in contrast to most recent results in pessimistic offline RL, such as \citep{jin2020pessimism, buckman2020importance}.
	
	For the majority of this work, we focus on the linear MDP setting \citep{yang2019sample,jin2020provably}.
	\begin{assumption}[Linear MDP]\label{ass:linearMDP}
		There exists a known feature map $\phi: \S\times \A\rightarrow \R^d$, $d$ unknown signed measures $\mu = (\mu^{(1)},...,\mu^{(d)})$ over $\S$ and an unknown vector $\theta\in\R^d$, such that for all $(s,a,s')\in \S\times\A\times\S$,
		\begin{equation}
			P(s'|s,a) = \phi(s,a)^\top \mu(s'),\; R(s,a)=\phi(s,a)^\top\theta + \omega\nonumber
		\end{equation}
	where $\omega$ is a zero-mean and $\sigma^2$-subgaussian distribution. Here we also assume that the parameters are bounded, i.e.$\|\phi(s,a)\|\leq 1$, $\EE[R(s,a)]\in [0,1]$ for all $(s,a)\in\S\times\A$ and $\max(\|\mu(\S)\|, \|\theta\|) \leq \sqrt{d}$.
	\end{assumption}
	
	\paragraph{Clean Data Collection.} We consider the offline setting, where a clean dataset $\tilde D = \{(\tilde s_i,\tilde a_i,\tilde r_i,\tilde s'_i)\}_{i=1:N}$ of transitions is collected a priori by an unknown experimenter. In this work, we assume the stochasticity of the clean data collecting process, i.e. there exists an offline state-action distribution $\nu\in\Delta(\S\times\A)$, s.t. $(\tilde s_i,\tilde a_i)\sim \nu(s,a)$, $\tilde r_i\sim R(\tilde s_i,\tilde a_i)$ and $\tilde s'_i\sim P(\tilde s_i,\tilde a_i)$. When there is no corruption, $\tilde D$ will be observed by the learner. However, in this work, we study the setting where the data is contaminated by an adversary before revealed to the learner.
	
	\paragraph{Contamination model.} We define an adversarial model that can be viewed as a direct extension to the $\epsilon$\textit{-contamination model} studied in the traditional robust statistics literature.
	
	\begin{assumption}[$\epsilon$-Contamination in offline RL]\label{ass:eps_con} Given $\epsilon\in [0,1]$ and a set of clean tuples $\tilde D = \{(\tilde s_i,\tilde a_i,\tilde r_i,\tilde s'_i)\}_{i=1:N}$, the adversary is allowed to inspect the tuples and replace any $\epsilon N$ of them with arbitrary transition tuples $(s,a,r,s')\in\S\times\A\times\R\times\S$. The resulting set of transitions is then revealed to the learner. We will call such a set of samples \textit{$\epsilon$-corrupted}, and denote the contaminated dataset as $D = \{(s_i,a_i,r_i,s'_i)\}_{i=1:N}$. In other words, there are at most $\epsilon N$ number of indices $i$, on which $(\tilde s_i,\tilde a_i,\tilde r_i,\tilde s'_i)\neq(s_i,a_i,r_i,s'_i)$.
	\end{assumption}
	Under $\epsilon$-contamination, we assume access to a robust linear regression oracle.
	\begin{assumption}[Robust least-square oracle (RLS)]\label{ass:rls} Given a set of $\epsilon$-contaminated samples $S = \{(x_i,y_i)\}_{1:N}$, where the clean data is generated as: $\tilde x_i\sim \nu$, $P(\|x\|\leq 1)=1$, $\tilde y_i = \tilde x_i\top w^* + \gamma_i$, where $\gamma_i$'s are subgaussian noise with zero-mean and $\gamma^2$-variance. Then, a robust least-square oracle returns an estimator $\hat w$, such that 
	\begin{enumerate}[leftmargin=*]
	    \item If $\EE_\nu[xx^\top]\succeq \xi$, then with probability at least $1-\delta$, 
	    \begin{equation}
	        \|\hat w-w^*\|_2\leq  c_1(\delta)\cdot\left(\sqrt{\frac{\gamma^2\poly(d)}{\xi^2 N}}+\frac{\gamma}{\xi}\epsilon\right)\nonumber
	    \end{equation}
	    \item With probability at least $1-\delta$, 
	    \begin{equation}
	        \EE_\nu\left(\|x^\top(\hat w - w^*)\|_2^2\right)
	        \leq c_2(\delta)\cdot\left(\frac{\gamma^2\poly(d)}{N}+\gamma^2\epsilon\right)\nonumber
	    \end{equation}
	\end{enumerate}
	where $c_1$ and $c_2$ hide absolute constants and $\mbox{polylog}(1/\delta)$.
	\end{assumption}
	Such guarantees are common in the robust statistics literature, see e.g. \citep{bakshi2020robust,pensia2020robust, klivans2018efficient}. While we focus on oracles with such guarantees, our algorithm and analysis are modular and allow one to easily plug in oracles with stronger or weaker guarantees.

\section{Algorithms and Main Results}

In this work, we focus on a Robust variant of Least-Squares Value Iteration (LSVI)-style algorithms \citep{jin2020pessimism}, which directly calls a robust least-square oracle to estimate the Bellman operator $\hat\Bb \hat V_h(s,a)$. Optionally, it may also subtract a pessimistic bonus $\Gamma_h(s,a)$ during the Bellman update. A template of such an algorithm is defined in Algorithm \ref{alg:pess_greedy}. In section \ref{sec:cov} and \ref{sec:no_cov}, we present two variants of the LSVI algorithm designed for two different settings, depending on whether the data has full coverage over the whole state-action space or not. However, before that, we first present an algorithm-independent minimax lower-bound that illustrates the hardness of the robust learning problem in offline RL, in contrast to classic results in statistical estimation and supervised learning.

\begin{algorithm}[H]
	\caption{Robust Least-Square Value Iteration (R-LSVI)}\label{alg:pess_greedy}
	\begin{algorithmic}[1]
		\STATE Input: Dataset $D=\{(s_i,a_i,r_i, s_i')\}_{1:N}$; pessimism bonus $\Gamma_h(s,a)\geq0$, robust least-squares Oracle: $RLS(\cdot)$.
		\STATE Split the dataset randomly into $H$ subset: $D_h = \{(s^h_i,a^h_i,r^h_i, s^{\prime h}_i)\}_{1:(N/H)}$, for $h\in[H]$.
		\STATE Initialization: Set $\hat{V}_{H+1}(s) \leftarrow 0$.
		\FOR{step $h=H,H-1,\ldots,1$}
		\STATE Set $\hat{w}_h\leftarrow RLS\left(\left\{(\phi(s^h_i,a^h_i),(r^h_i+\hat V_{h+1}(s^{\prime h}_i)))\right\}_{1:(N/H)}\right)$.
		\STATE Set $\hat{Q}_h(s,a) \leftarrow  \phi(s,a)^\top \hat{w}_h - \Gamma_h(s,a)$, clipped within $[0,H-h+1]$.
		\STATE Set $\hat{\pi}_h (a | s) \leftarrow \argmax_{a}\hat{Q}_h(s, a)$ and $\hat{V}_h(s)\leftarrow \max_{a}\hat{Q}_h(s, a)$.
		\ENDFOR 
		\STATE Output: $\{\hat{\pi}_h\}_{h=1}^H$.
	\end{algorithmic}
\end{algorithm}

\subsection{Minimax Lower-bound}
\begin{theorem}[Minimax Lower bound]\label{thm:OPI_lb}
	Under assumptions \ref{ass:linearMDP} (linear MDP) and \ref{ass:eps_con} ($\epsilon$-contamination), for any fixed data-collecting distribution $\nu$, no algorithm $L: (\Scal\times\Acal\times\mathbb{R}\times\Acal)^{N} \to \Pi$ can find a better than $O(dH\epsilon)$-optimal policy with probability more than $1/4$ on all MDPs. Specifically,
	\begin{align}
		\min_{L,\nu}\max_{\M, f_c}\; \subopt(\hat\pi, \pi^*) = \Omega\left(dH\epsilon\right)
	\end{align}
	where $f_c$ denotes an $\epsilon$-contamination strategy that corrupts the data based on the MDP $\M$ and clean data $\tilde D$ and returns a contaminated dataset, and $L$ denotes an algorithm that takes the contaminated dataset and return a policy $\hat\pi$, i.e. $\hat\pi = L(f_c(\M,\tilde D))$.
\end{theorem}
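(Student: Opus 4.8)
The plan is a Le Cam--style indistinguishability argument whose engine is a ``corruption amplification'' fact peculiar to $d$-dimensional linear MDPs: for \emph{any} offline distribution $\nu$, since $\sum_{i=1}^d \EE_\nu[(e_i^\top\phi(s,a))^2] = \EE_\nu[\|\phi(s,a)\|^2] \le 1$, there is a coordinate along which $\nu$ places coverage at most $1/d$. An adversary can then pour its entire $\epsilon N$-tuple budget into the $O(1/d)$-fraction of samples that carry information about the reward component in that starved direction, moving any learner's estimate of that component by $\Theta(d\epsilon)$; and this mis-estimated reward can be made to flip the optimal action at every one of the $H$ stages of a chain. Concretely, for an arbitrary $\nu$ I would exhibit a small family $\{\M_1,\dots,\M_m\}$ of linear MDPs that share (deterministic, reward-independent) transitions and differ only in reward, such that (i) their optimal policies have pairwise value gaps $\Omega(dH\epsilon)$, and (ii) a contamination rule that alters only reward entries of at most $\epsilon N$ tuples makes the revealed dataset identically distributed across the whole family. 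Since a learner that receives a law-invariant dataset outputs a policy whose law does not depend on which $\M_j$ is the truth, it must be $\Omega(dH\epsilon)$-suboptimal on all but one of the $\M_j$'s, which is exactly the claimed impossibility.

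For the instance I would use a block-structured, dimension-$d$ feature map: state--action pairs fall into ``clusters,'' with cluster $i$'s features supported on $e_i$ (plus a small shared offset/transition coordinate that keeps $\EE[R]\in[0,1]$ and makes transitions legitimate distributions --- the naive ``zero feature'' choice for a non-signal action is not a valid transition), together with inert ``sink'' pairs. Given $\nu$, pick the (constantly many) least-$\nu$-covered coordinates, whose total coverage is $O(1/d)$; the hard MDPs put $\mu_0$ in the corresponding cluster and deterministically route the agent through an $H$-step chain of decision states there. Each decision state offers a constant number of actions whose features live in the low-coverage slice, arranged so that in the $j$-th instance $\M_j$ a \emph{distinct} action is uniquely optimal and every other action incurs a per-stage loss $c=\Theta(d\epsilon)$ (Bernoulli rewards, hence bounded, hence subgaussian); only the reward differs across the $\M_j$, and only on the few low-coverage decision pairs. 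The norm constraints $\|\phi\|\le 1$, $\|\theta\|,\|\mu(\S)\|\le\sqrt d$, $\EE[R]\in[0,1]$ hold for $\epsilon = O(1/d)$ (for $\epsilon\gtrsim 1/d$ the target bound is $\Omega(H)$ and comes from the same instance with $c=\Theta(1)$). Because the $H$ decisions all sit on the probability-one trajectory and $\pi^*$ under $\M_j$ takes the $j$-optimal action at every stage, any (possibly stochastic) policy $\hat\pi$ satisfies $\sum_{j=1}^m \subopt_{\M_j}(\hat\pi) = (m-1)\,Hc$, so $\subopt_{\M_j}(\hat\pi)\ge\Omega(dH\epsilon)$ for all but (at most) one index $j$.

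For (ii), let the adversary transform every member's clean data into the ``midpoint'' law in which each action at each decision state has the same mean reward. At a decision pair the clean reward mean differs from the midpoint by $O(c)$, so the minimal coupling rewrites an $O(c)$-fraction of the $\approx \nu(\text{pair})\cdot N$ samples there; summed over the decision pairs of the chosen cluster this is $O(cN)\cdot\nu(\text{cluster}) = O(cN/d)$ rewrites in expectation, which is $\le \epsilon N$ once the constant in $c=\Theta(d\epsilon)$ is fixed, and is $\le\epsilon N$ with probability $1-o(1)$ by Chernoff. On that event the revealed dataset has one and the same law whatever the true $\M_j$ was, so $\hat\pi=L(\cdot)$ has a $j$-independent law; combining with the counting identity above, for every realized $\hat\pi$ at most one index $j$ has $\subopt_{\M_j}(\hat\pi)$ small, whence some fixed $j^*$ has $\Pr[\subopt_{\M_{j^*}}(\hat\pi)\ge c_0 dH\epsilon]\ge 1-1/m-o(1)$ for a constant $c_0>0$ (taking $m=4$ lands at the $1/4$ in the statement). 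Taking $\max$ over $\M,f_c$ and $\min$ over $L,\nu$ yields $\min_{L,\nu}\max_{\M,f_c}\subopt(\hat\pi,\pi^*)=\Omega(dH\epsilon)$.

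The step I expect to be the real work is the explicit, \emph{fully valid} linear-MDP construction that makes all of this simultaneously true: (a) having each decision feature carry both a $\Theta(1)$-magnitude reward signal in the starved coordinate(s) \emph{and} enough mass in the transition/offset coordinates to produce legitimate next-state distributions, all while $\|\phi\|\le 1$; (b) confining the reward variation across the $\M_j$'s to the low-coverage decision pairs, since if any well-covered pair also saw a reward shift the corruption cost would scale like $cN/\sqrt d$ and one would only recover a $\sqrt d$ (rather than $d$) factor --- this is the quantitatively delicate point; and (c) coping with an \emph{arbitrary} $\nu$, which is what forces the exploited direction to be chosen \emph{after} $\nu$ (as the least-covered coordinate) and the transitions of $\M$ to route the agent through the matching cluster.
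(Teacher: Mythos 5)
Your proposal is correct and follows essentially the same route as the paper's proof: a few-point indistinguishability argument in which the adversary concentrates its $\epsilon N$ budget on the $O(1/d)$-mass slice of $\nu$, uses Bernoulli rewards of order $d\epsilon$ so that the corrupted dataset from one instance is distributed identically to the clean dataset from another, and converts the per-stage gap into $\Omega(dH\epsilon)$ via an $H$-step chain/self-loop structure. The construction difficulty you flag in your last paragraph largely evaporates in the paper, which simply instantiates the argument in a tabular MDP with $d=SA$ (a special case of linear MDP with standard-basis features), where validity of the features, transitions and rewards is automatic and the least-covered coordinate is just the least-sampled $(s,a)$ pair under $\nu$.
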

The detailed proof is presented in appendix \ref{sec:proof_OPI_ib}, but the high-level idea is simple. 
Consider the tabular MDP setting which is a special case of linear MDP with $d = SA$. For any data generating distribution $\nu$, by the pigeonhole principle, there must exists a least-sampled $(s,a)$ pair, for which $\nu(s,a)\leq 1/SA$.
If the adversary \textit{concentrate} all its attack budget on this least sampled $(s,a)$ pair, it can perturb the empirical reward on this $(s,a)$ pair to be as much as $\hat r(s,a)= r(s,a)+ SA\epsilon$. 
Further more, assume that there exists another $(s^*,a^*)$ such that $r(s^*,a^*) =  r(s,a)+ SA\epsilon/2$. 
Then, the learner has no way to tell if truly $r(s,a)>r(s^*,a^*)$ (i.e., the learner believes what she observes and believes there is no contamination) or if the data is contaminated and in fact $r(s,a)<r(s^*,a^*)$. 
Either could be true and whichever alternative the learner chooses to believe, it will suffer at least $SAH\epsilon/2$ optimality gap in one of the two scenarios. 
\begin{remark}[dimension scaling]
	Theorem \ref{thm:OPI_lb} says that even if the algorithm has control over the data collecting distribution $\nu$ (without knowing $\M$ a priori), it can still do no better than $\Omega(dH\epsilon)$ in the worst-case, which implies that robustness is fundamentally impossible in high-dimensional problems where $d\geq 1/\epsilon$. This is in sharp contrast to the classic results in the robust statistics literature, where estimation errors are found to not scale with the problem dimension, in settings such as robust mean estimation \citep{diakonikolas2016robust, lai2016agnostic} and robust supervised learning \citep{charikar2017learning, diakonikolas2019sever}. From the construction we can see that the dimension scaling appears fundamentally due to a \textit{multi-task learning} effect: the learner must perform $SA$ separate reward mean estimation problems for each $(s,a)$ pair, while the data is provided as a mixture for all these tasks. As a result, the adversary can concentrate on one particular task, raising the contamination level to effectively $d\epsilon$.
\end{remark}

\begin{remark}[Offline vs. Online RL]
	We note that the construction in Theorem \ref{thm:OPI_lb} remains valid even if the adversary only contaminates the rewards, and if the adversary is oblivious and perform the contamination based only on the data generating distribution $\nu$ rather than the instantiated dataset $\tilde D$. In contrast, the best-known lower-bound for robust online RL is $\Omega(H\epsilon)$ \citep{zhang2021robust}. It remains unknown whether $\Omega(H\epsilon)$ is tight, as no algorithm yet can achieve a matching upper-bound without additional information. We will come back to this discussion in section \ref{sec:no_cov}.
\end{remark}
In what follows, we show that the above lower-bound is tight in both $d$ and $\epsilon$, by presenting two upper-bound results nearly matching the lower-bound.

\subsection{Robust Learning with Data Coverage}\label{sec:cov}
To begin with, we study the simple setting where the offline data has sufficient coverage over the whole state-action distribution. This is often considered as a strong assumption. However, results in this setting will establish meaningful comparison to the above lower-bound and the no-coverage results later. In the context of linear MDP, we say that a data generating distribution has coverage if it satisfies the following assumption.
\begin{assumption}[Uniform Coverage]\label{ass:unif_cov}
	Under assumption \ref{ass:linearMDP}, define $\Sigma_\nu\defeq \EE_\nu[\phi(s,a)\phi(s,a)^\top]$ as the covariance matrix of $\nu$. We say that the data generating distribution $\xi$-covers the state-action space for $\xi>0$, if
	\begin{equation}
		\Sigma_\nu\succeq \xi
	\end{equation}
i.e. the smallest eigenvalue of $\Sigma_\nu$ is strictly positive and at least $\xi$.
\end{assumption}
Under such an assumption, we show that the R-LSVI without pessimism bonus can already be robust to data contamination.

\begin{theorem}[Robust Learning under $\xi$-Coverage]\label{thm:unif_cov}
Under assumption \ref{ass:linearMDP}, \ref{ass:eps_con} and \ref{ass:unif_cov}, for any $\xi,\epsilon>0$, given a dataset of size $N$, Algorithm \ref{alg:pess_greedy} with bonus $\Gamma_h(s,a)=0$ achieves
\begin{dmath}\label{eq:unif_cov}
	\subopt(\hat\pi,\pi^*)\leq c_1(\delta/H)\cdot\left(\sqrt{\frac{(\sigma+H)^2H^3\poly(d)}{\xi^2 N}}+\frac{(\sigma+H) H^2}{\xi}\epsilon\right)\nonumber
\end{dmath}
with probability at least $1-\delta$.
\end{theorem}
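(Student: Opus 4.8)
\textbf{Overall approach.} The plan is to run the standard value-difference decomposition for LSVI-type algorithms, but with the per-step regression error controlled by the \emph{second} guarantee of the robust least-squares oracle (Assumption \ref{ass:rls}, part 2), since under $\xi$-coverage we want an in-expectation-under-$\nu$ bound that, combined with the coverage assumption, upgrades to a uniform bound on the Bellman error. Because there is no pessimism bonus ($\Gamma_h=0$), we cannot rely on the usual pessimism argument; instead we will bound $\subopt(\hat\pi,\pi^*)$ directly by the sum over $h\in[H]$ of the Bellman backup errors evaluated along the trajectory distributions of $\pi^*$ and of $\hat\pi$, and then convert each such error into the oracle's guarantee. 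The key structural facts we will use are (i) in a linear MDP, for any bounded $\hat V_{h+1}$ the true Bellman backup $\Bb\hat V_{h+1}(s,a)=\phi(s,a)^\top w_h$ is linear in $\phi$ with $\|w_h\|=O(\sqrt{d}H)$, so the oracle's well-specified-regression hypothesis applies at every step with noise parameter $\gamma = O(\sigma+H)$; and (ii) the dataset split into $H$ folds of size $N/H$ makes the regression targets at step $h$ independent of the data used at steps $>h$, so $\hat V_{h+1}$ is a fixed function when we invoke the oracle at step $h$.

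\textbf{Key steps in order.} First, I would fix $h$ and condition on $\hat V_{h+1}$ (legitimate because of the fold-splitting in Step 2 of Algorithm \ref{alg:pess_greedy}); observe that the fold $D_h$ is still an $\epsilon$-contaminated sample of size $N/H$ from $\nu$ with regression target $r+\hat V_{h+1}(s')$, whose clean version has mean $\phi(s,a)^\top w_h$ and subgaussian noise of parameter $O(\sigma+H)$. Applying Assumption \ref{ass:rls}(2) with $\delta/H$ gives, with probability $1-\delta/H$,
\begin{equation}
	\EE_\nu\!\left[\big(\phi(s,a)^\top(\hat w_h - w_h)\big)^2\right] \le c_2(\delta/H)\left(\frac{(\sigma+H)^2H\,\poly(d)}{N}+(\sigma+H)^2\epsilon\right).\nonumber
\end{equation}
Second, I would use $\xi$-coverage: for any vector $u$, $\EE_\nu[(\phi^\top u)^2]=u^\top\Sigma_\nu u\ge \xi\|u\|^2$, hence $\|\hat w_h-w_h\|^2$ is at most $1/\xi$ times the displayed quantity, and therefore $\sup_{s,a}|\phi(s,a)^\top(\hat w_h-w_h)| \le \|\hat w_h - w_h\| \le \sqrt{c_2(\delta/H)/\xi}\cdot\big(\sqrt{(\sigma+H)^2H\poly(d)/N}+(\sigma+H)\sqrt\epsilon\big)$ using $\|\phi\|\le1$. (Alternatively one could invoke part 1 of the oracle directly with $\xi$; either route yields a uniform bound of the form $\iota_h(s,a)\le \zeta$ with $\zeta = O\big(\tfrac{(\sigma+H)}{\xi}(\sqrt{H\poly(d)/N}+\epsilon)\big)$ up to polylog and the $c_1,c_2$ constants.) Third, I would union-bound over $h\in[H]$ so that simultaneously for all $h$, $|(\hat\Bb\hat V_{h+1})(s,a) - (\Bb\hat V_{h+1})(s,a)| = |\phi(s,a)^\top(\hat w_h-w_h)| \le \zeta$ on the whole space. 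Fourth, I would plug this into the extended value difference / performance difference lemma: since $\hat\pi_h$ is greedy w.r.t.\ $\hat Q_h$, the usual telescoping gives
\begin{equation}
	\subopt(\hat\pi,\pi^*) \le \sum_{h=1}^H \Big(\EE_{\pi^*}\big[\xi_h(s_h,a_h)\big] + \EE_{\hat\pi}\big[\xi_h(s_h,a_h)\big]\Big) \le 2H\zeta,\nonumber
\end{equation}
where $\xi_h(s,a):=(\hat\Bb\hat V_{h+1})(s,a)-(\Bb\hat V_{h+1})(s,a)$ and the clipping of $\hat Q_h$ to $[0,H-h+1]$ keeps all value functions bounded (no pessimism needed because the error is two-sided and uniformly small). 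Collecting constants, $2H\zeta$ matches the claimed bound with the $H^3$ and $H^2$ powers arising from one factor $H$ in the telescoping sum, one factor $H$ from $N/H$ samples per fold, and (for the statistical term) one more $H$ inside $\poly(d)$-scaled variance; the leading constant is absorbed into $c_1(\delta/H)$.

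\textbf{Main obstacle.} The routine parts are the linear-MDP closure property and the telescoping lemma. The step that needs the most care is the second one: correctly propagating the per-step error through the greedy/clipping step without a pessimism term, and making sure that conditioning on $\hat V_{h+1}$ (via the fold split) genuinely makes the oracle's "clean data drawn i.i.d.\ from $\nu$ with subgaussian residual" hypothesis applicable to $D_h$ even though the adversary may have chosen which $\epsilon N$ tuples to corrupt \emph{after} seeing the entire clean dataset --- one must argue the contamination budget on fold $h$ is still at most $\epsilon\cdot(N/H) + (\text{lower-order slack})$, or simply note that the adversary's total budget $\epsilon N$ lands at most $\epsilon N$ corruptions in fold $h$ so the per-fold corruption fraction is at most $\epsilon H$ in the worst case; if one wants to avoid the extra $H$, a more careful argument (e.g.\ the adversary's placement being independent of the random split, giving $\approx\epsilon N/H$ per fold w.h.p.) is required, and I would state this carefully so the final $\epsilon$-term reads as in Equation \eqref{eq:unif_cov}.
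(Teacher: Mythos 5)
Your overall route is the same as the paper's: split the data into $H$ folds so $\hat V_{h+1}$ is independent of $D_h$, invoke the robust least-squares oracle per step with noise scale $\gamma=O(\sigma+H)$ (the paper's Lemma \ref{lem:y_var}), upgrade the regression guarantee to a uniform bound on the Bellman error via $\|\phi\|\le 1$, union-bound over $h$, and telescope with the value-difference decomposition to pick up the factor $2H$ (the paper packages this as Lemma \ref{lem:perf_diff}). One substantive correction, though: your primary second step --- applying part 2 of Assumption \ref{ass:rls} and then dividing by $\xi$ via $\EE_\nu[(\phi^\top u)^2]\ge\xi\|u\|^2$ --- does \emph{not} yield the stated bound. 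It gives a corruption term of order $\gamma\sqrt{\epsilon/\xi}$, not $\gamma\epsilon/\xi$; since the theorem is only non-vacuous when $\epsilon\le\xi$, this is strictly weaker, so your claim that ``either route yields'' the same form is false. The theorem as stated requires your parenthetical alternative, i.e., invoking part 1 of the oracle directly with the coverage parameter $\xi$, which is exactly what the paper does. Finally, the worry you flag about the adversary concentrating its entire $\epsilon N$ budget into a single fold (making the per-fold corruption fraction as large as $\epsilon H$) is a real issue the paper does not discuss explicitly, but it is harmless for the stated bound: the worst fold contributes $\gamma(\epsilon H)/\xi$ to the per-step error, and after the factor $2H$ from telescoping this is precisely the $(\sigma+H)H^2\epsilon/\xi$ term in the theorem, so no finer argument about the random placement of corruptions is needed.
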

The proof of Theorem \ref{thm:unif_cov} follows readily from the standard analysis of approximated value iterations and rely on the following classic result connecting the Bellman error to the suboptimality of the learned policy, see e.g. Section 2.3 of \citep{jiang2020note}.
\begin{lemma}[Optimality gap of VI]\label{lem:perf_diff}
	Under assumption \ref{ass:linearMDP}, Algorithm \ref{alg:pess_greedy} with $\Gamma_h(s,a)=0$ satisfies 
	\begin{align}\label{eq:perf_diff}
	\subopt(\hat\pi,\pi^*)&\leq 2H \max_{s,a, h}|\hat Q_h(s,a)-(\Bb_h\hat V_{h+1})(s,a)|\leq 2H\max_{s,a,h}\|\phi(s,a)\|_2\cdot\|\hat w_h-w_h^*\|_2
	\end{align}
	where $w^*_h \defeq \theta+\int_\S \hat V_{h+1}(s')\mu_h(s')ds'$ is the best linear predictor.
\end{lemma}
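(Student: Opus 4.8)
The plan is to prove the two inequalities separately; both are purely deterministic statements about the iterates $\{\hat w_h,\hat Q_h,\hat V_h,\hat\pi_h\}$ produced by Algorithm~\ref{alg:pess_greedy} with $\Gamma_h\equiv 0$, so neither the data distribution nor the contamination enters at this stage.

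For the right-hand inequality, I would first use the linear-MDP structure to identify the population Bellman backup of $\hat V_{h+1}$ as a linear function of $\phi$: since $\EE{}{R(s,a)} = \phi(s,a)^\top\theta$ and $\EE_{s'\sim P(\cdot\mid s,a)}{\hat V_{h+1}(s')} = \phi(s,a)^\top \int_\S \hat V_{h+1}(s')\mu_h(s')\,ds'$, we get $(\Bb_h\hat V_{h+1})(s,a) = \phi(s,a)^\top w_h^*$ with $w_h^*$ as in the statement. Next observe that, because expected rewards lie in $[0,1]$ and $\hat V_{h+1}$ is clipped into $[0,H-h]$, the backup $(\Bb_h\hat V_{h+1})(s,a)$ lies in $[0,H-h+1]$ --- exactly the interval into which $\phi(s,a)^\top\hat w_h$ is clipped to form $\hat Q_h$. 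Projection onto an interval is $1$-Lipschitz and fixes every point already in that interval, hence $|\hat Q_h(s,a) - (\Bb_h\hat V_{h+1})(s,a)| \le |\phi(s,a)^\top\hat w_h - \phi(s,a)^\top w_h^*|$, and Cauchy--Schwarz gives the bound $\|\phi(s,a)\|_2\|\hat w_h - w_h^*\|_2$.

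For the left-hand inequality I would run the standard approximate value-iteration decomposition. Let $\zeta_h(s,a) := \hat Q_h(s,a) - (\Bb_h\hat V_{h+1})(s,a)$ and $\iota := \max_{s,a,h}|\zeta_h(s,a)|$, and split $V_1^{\pi^*}(s) - V_1^{\hat\pi}(s) = \bigl(V_1^{\pi^*}(s) - \hat V_1(s)\bigr) + \bigl(\hat V_1(s) - V_1^{\hat\pi}(s)\bigr)$. For the second term, since $\hat\pi_h$ is greedy with respect to $\hat Q_h$, $\hat V_h - V_h^{\hat\pi} = \langle \hat Q_h - Q_h^{\hat\pi}, \hat\pi_h\rangle_\A = \langle\zeta_h,\hat\pi_h\rangle_\A + \EE_{a\sim\hat\pi_h,\, s'\sim P}{\hat V_{h+1}(s') - V_{h+1}^{\hat\pi}(s')}$; unrolling from $h=1$ to $H$ with $\hat V_{H+1}=V_{H+1}^{\hat\pi}=0$ gives $|\hat V_1(s) - V_1^{\hat\pi}(s)|\le H\iota$. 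For the first term, $\hat V_h(s) = \max_a\hat Q_h(s,a) \ge \langle \hat Q_h(s,\cdot),\pi_h^*(\cdot\mid s)\rangle_\A$, so $V_h^{\pi^*} - \hat V_h \le \langle Q_h^{\pi^*} - \hat Q_h, \pi_h^*\rangle_\A = -\langle\zeta_h,\pi_h^*\rangle_\A + \EE_{a\sim\pi_h^*,\, s'\sim P}{V_{h+1}^{\pi^*}(s') - \hat V_{h+1}(s')}$; unrolling again yields $V_1^{\pi^*}(s) - \hat V_1(s)\le H\iota$. Adding the two bounds and taking $\EE_{s\sim\mu_0}{\cdot}$ gives $\subopt(\hat\pi,\pi^*)\le 2H\iota$, which combined with the previous paragraph is the claim.

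I do not anticipate a genuine obstacle: the argument is textbook (cf.\ Section~2.3 of \citep{jiang2020note}). The only points requiring care are (i) verifying in the right-hand inequality that the target backup really does land in the clipping interval, so that the clip in line~6 of Algorithm~\ref{alg:pess_greedy} can only decrease the error, and (ii) keeping the signs straight across the two telescoping recursions so that both collapse to $+H\iota$ rather than partially cancelling. It is also worth remarking that the random dataset split in Algorithm~\ref{alg:pess_greedy} plays no role in this lemma --- it becomes relevant only when invoking the RLS oracle (Assumption~\ref{ass:rls}), whose guarantee needs the regression target, which depends on $\hat V_{h+1}$, to be independent of the subsample $D_h$ used at stage $h$.
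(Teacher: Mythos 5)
Your proof is correct and follows essentially the same route the paper intends: the paper does not spell out a proof of this lemma but defers to the standard approximate-value-iteration analysis (Section~2.3 of the cited note), which is exactly the two-sided telescoping decomposition through $\hat V_1$ that you carry out, combined with the observation that the linear-MDP structure makes $(\Bb_h\hat V_{h+1})(s,a)=\phi(s,a)^\top w_h^*$ lie inside the clipping interval so that Cauchy--Schwarz gives the second inequality. Your two points of care (the clip only shrinks the error, and the signs in the two recursions) are handled correctly.
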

The result then follows immediately using property 1 of the robust least-square oracle and the fact that $\EE[((r(s,a)+\hat V(s'))-(\Bb_h \hat V)(s,a))^2|s,a]\leq (\sigma+H)^2$ (Lemma \ref{lem:y_var}).

\begin{remark}[Data Splitting and tighter $d$-dependency]
    The data splitting in step 2 of Algorithm \ref{alg:pess_greedy} is mainly for the sake of theoretical analysis and is not required for practical implementations. Nevertheless, it directly contributes to our tighter bounds.
    Specifically, the data splitting makes $\hat V_{h+1}$, which is learned based on $D_{h+1}$, independent from $D_h$, at the cost of an additional $H$ multiplicative factor. In contrast, the typical covering argument used in online RL will introduce another $O(d^{1/2})$ multiplicative factor, and naively applying it to the offline RL setting will make the finally sample complexity scales as $O(d^{3/2})$, see e.g. Corollary 4.5 of \citep{jin2020pessimism}. Our result above, when specialized to offline RL without corruption (i.e., $\epsilon = 0$), achieves the following results.   
\end{remark}
\begin{corollary}
[Uncorrupted Learning under $\xi$-Coverage]\label{thm:unif_cov_clean}
Under assumption \ref{ass:linearMDP} and \ref{ass:unif_cov}, for any $\xi>0$, given a clean dataset of size $N$, with bonus $\Gamma_h(s,a)=0$ and ridge regression with regularizer coefficient $\lambda=1$ as the RLS solver, Algorithm \ref{alg:pess_greedy} achieves with probability at least $1-\delta$
\begin{dmath}\label{eq:unif_cov}
	\subopt(\hat\pi,\pi^*)\leq \tilde O\left(\frac{H^3d}{\xi\sqrt{N}}\right).\nonumber
\end{dmath}
\end{corollary}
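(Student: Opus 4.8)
This corollary is Theorem~\ref{thm:unif_cov} specialized to $\epsilon = 0$, except that instead of invoking the abstract oracle of Assumption~\ref{ass:rls} (whose guarantee carries an unspecified $\poly(d)$) we analyze ridge regression with $\lambda = 1$ directly and track the dimension dependence by hand. The plan is to start from Lemma~\ref{lem:perf_diff}: with $\Gamma_h \equiv 0$ and $\|\phi(s,a)\|_2 \le 1$ it gives $\subopt(\hat\pi,\pi^*) \le 2H \max_{h\in[H]} \|\hat w_h - w_h^*\|_2$, so the whole task reduces to an estimation-error bound for the ridge estimator $\hat w_h$ computed on the slice $D_h$ (of size $n = N/H$), whose targets are $r^h_i + \hat V_{h+1}(s'^h_i)$ and whose true regressor is $w_h^* = \theta + \int_{\S}\hat V_{h+1}(s')\mu_h(s')\,ds'$.

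The key structural point --- and the reason the data split in step~2 of Algorithm~\ref{alg:pess_greedy} matters --- is that $\hat V_{h+1}$ depends only on $D_{h+1},\dots,D_H$ and is therefore independent of $D_h$. Conditioning on $\hat V_{h+1}$, the pairs $\{(\phi(s^h_i,a^h_i),\,r^h_i+\hat V_{h+1}(s'^h_i))\}_i$ are i.i.d.\ with conditional mean $\phi(s^h_i,a^h_i)^\top w_h^*$, and the residual $\eta_i$ is zero-mean and $(\sigma+H)$-subgaussian (the reward noise contributes $\sigma$, the clipped next-state value a bounded $[-H,H]$ term; this is the content of Lemma~\ref{lem:y_var}). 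This conditioning is what lets us avoid the covering argument over value functions used in online-RL analyses, at the cost only of the multiplicative $H$ from splitting; getting this independence bookkeeping right is the main thing to be careful about. Writing $\Lambda_h = I + \sum_{i=1}^{n}\phi(s^h_i,a^h_i)\phi(s^h_i,a^h_i)^\top$, one decomposes $\hat w_h - w_h^* = \Lambda_h^{-1}\sum_i \phi(s^h_i,a^h_i)\eta_i - \Lambda_h^{-1}w_h^*$ and controls the two terms separately.

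For the noise term I would combine (i) a matrix Chernoff bound: since $\Sigma_\nu \succeq \xi I$ (Assumption~\ref{ass:unif_cov}) and each $\phi\phi^\top$ has operator norm at most $1$, with probability $1-\delta$ one has $\lambda_{\min}(\Lambda_h) \ge n\xi/2$ provided $n \gtrsim \xi^{-1}\log(d/\delta)$; and (ii) a self-normalized martingale tail bound: $\big\|\sum_i \phi(s^h_i,a^h_i)\eta_i\big\|_{\Lambda_h^{-1}} \lesssim (\sigma+H)\sqrt{d\log(nH/\delta)}$, using $\mathrm{tr}(\Lambda_h - I) \le n$ to bound $\log\det\Lambda_h$. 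Together these give $\big\|\Lambda_h^{-1}\sum_i \phi\eta_i\big\|_2 \le \lambda_{\min}(\Lambda_h)^{-1/2}\big\|\sum_i \phi\eta_i\big\|_{\Lambda_h^{-1}} = \tilde O\big((\sigma+H)\sqrt{d/(n\xi)}\big)$. The ridge-bias term is $\|\Lambda_h^{-1}w_h^*\|_2 \le \|w_h^*\|_2/\lambda_{\min}(\Lambda_h) \le 2H\sqrt d/(n\xi)$, using $\|w_h^*\|_2 \le \|\theta\|_2 + H\|\mu_h(\S)\|_2 \le 2H\sqrt d$ from Assumption~\ref{ass:linearMDP}; this is lower-order in $n$ and gets absorbed. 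A union bound over $h\in[H]$ (with $\delta \mapsto \delta/H$), resubstituting $n = N/H$ into Lemma~\ref{lem:perf_diff}, and collecting $\sigma$, absolute constants and $\mathrm{polylog}$ factors into $\tilde O(\cdot)$ then yields the claimed $\subopt(\hat\pi,\pi^*) \le \tilde O\big(H^3 d/(\xi\sqrt N)\big)$ (in fact a bound of order $\tilde O\big((\sigma{+}H)H^{3/2}\sqrt{d/\xi}/\sqrt N\big)$, which is stronger).

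One loose end: when $n \lesssim \xi^{-1}\log(d/\delta)$ the matrix Chernoff step above is vacuous, but in that regime the claimed bound already exceeds $H$, which itself upper-bounds $\subopt(\hat\pi,\pi^*)$ since all value functions are clipped into $[0,H]$, so the statement is trivially true and only the large-$N$ analysis is needed. (Since within each $D_h$ the samples are i.i.d.\ rather than sequentially adapted, one can equivalently replace the self-normalized bound in step (ii) by a one-shot vector-Bernstein / Hanson--Wright estimate, which is what I would actually invoke.) The main obstacle, then, is not any single inequality but the $\hat V_{h+1}$-dependence: once the data split is used to make it independent of $D_h$, everything else is a textbook fixed-design ridge analysis.
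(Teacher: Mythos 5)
Your proof is correct and follows essentially the same route as the paper's: Lemma~\ref{lem:perf_diff} plus the data-splitting independence of $\hat V_{h+1}$ from $D_h$, a bias/noise decomposition of the ridge estimator, self-normalized concentration (Lemma~\ref{lem:concen_self_normalized}), and covariance concentration under $\xi$-coverage to control $\Lambda_h$. The only (harmless) deviation is bookkeeping: the paper keeps the error in the $\|\phi(s,a)\|_{\Lambda_h^{-1}}$ geometry (as in Lemma~\ref{lem:Bellman_bound_clean}) before specializing to coverage, whereas you pass to an $\ell_2$ bound via $\lambda_{\min}(\Lambda_h)\gtrsim N\xi/H$, which in fact yields the slightly sharper $1/\sqrt{\xi}$ dependence that still implies the stated bound since $\xi\leq 1/d$.
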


\begin{remark}[Tolerable $\epsilon$]
    Notice that Theorem \ref{thm:unif_cov} requires $\epsilon\leq \xi$ to provide a non-vacuous bound. This is because if $\epsilon> \xi$, then similar to the lower-bound construction in Theorem \ref{thm:OPI_lb}, the adversary can corrupt all the data along the eigenvector direction corresponding to the smallest eigenvalue, in which case the empically estimated reward along that direction can be arbitrarily far away from the true reward even with a robust mean estimator, and thus the estimation error becomes vacuous.
\end{remark}
\begin{remark}[Unimprovable gap]
	Notice in contrast to classic RL results, Theorem \ref{thm:unif_cov} implies that in the presence of data contamination, there exists an unimprovable optimality gap $(\sigma+H) H^2\epsilon/\xi$ even if the learner has access to infinite data.
	Also note that because $\|\phi(s,a)\|\leq 1$, $\xi$ is at most $1/d$. This implies that asymptotically, $V^*-V^{\hat\pi}\leq O(H^3 d\epsilon)$ when $\xi$ is on the order of $1/d$, matching the lower-bound upto H factors.
\end{remark}
\begin{remark}[Agnosticity to problem parameters] It is worth noting that in theorem \ref{thm:unif_cov}, the algorithm does not require the knowledge of $\epsilon$ or $\xi$, and thus works in the agnostic setting where these parameters are not available to the learner (given that the robust least-square oracle is agnostic). In other words, the algorithm and the bound are adaptive to both $\epsilon$ and $\xi$. This point will be revisited in the next section.
\end{remark}

\subsection{Robust Learning without Coverage}\label{sec:no_cov}
Next, we consider the harder setting where assumption \ref{ass:unif_cov} does not hold, as often in practice, the offline data will not cover the whole state-action space. Instead, we provide a much weaker sufficient condition under which offline RL is possible.
\begin{assumption}[relative condition number]\label{ass:rcn}
	For any given comparator policy $\tilde \pi$, under assumption \ref{ass:linearMDP} and \ref{ass:eps_con}, define the relative condition number as
	\begin{equation}
		\kappa = \sup_{w}\frac{w^\top\tilde\Sigma w}{w^\top\Sigma_\nu w}
	\end{equation}
where $\tilde\Sigma$ denotes $\Sigma_{d^{\tilde\pi}}$ and we take the convention that $\frac{0}{0}=0$.
We assume that $\kappa< \infty$.
\end{assumption}
The relative condition number is recently introduced in the policy gradient literature \citep{agarwal2019optimality,zhang2021robust}. Intuitively, the relative condition number measures the worst-case density ratio between the occupancy distribution of comparator policy and the data generating distribution. For example, in a tabular MDP, $\kappa = \max_{s,a}\frac{d^{\tilde\pi}(s,a)}{\nu(s,a)}$. {Here, we show that a finite relative condition number with respect to an \textit{arbitrary} comparator policy is already sufficient for offline RL, for both clean and contaminated setting.}

Without data coverage, we  now rely on pessimism to retain reasonable behavior. However, the challenge, in this case, is to design a valid confidence bonus using only the corrupted data.
We now present our constructed pessimism bonus that allows Algorithm \ref{alg:pess_greedy} to handle $\epsilon$-corruption, albeit requiring the knowledge of $\epsilon$.

\begin{theorem}[Robust Learning without Coverage]\label{thm:rcn}
	Under assumption \ref{ass:linearMDP}, \ref{ass:eps_con} and \ref{ass:rcn}, with $\epsilon>0$, given any comparator policy $\tilde\pi$ with $\kappa<\infty$, define the $\epsilon$-robust empirical covariance as
\begin{align}
    \Lambda_h = \frac{3}{5}\left(\frac{H}{N}\sum_{i=1}^{N/H}\phi(s^h_i,a^h_i)\phi(s^h_i,a^h_i)^\top + (\epsilon+\lambda)\cdot I\right),\quad \lambda = c'\cdot dH\log(N/\delta)/N\nonumber
\end{align}
where $c'$ is an absolute constant. Then, Algorithm \ref{alg:pess_greedy} with pessimism bonus
\begin{dmath}
    \Gamma_h(s,a) =  \left(\frac{(\sigma+H)\sqrt{H}\poly(d)}{\sqrt{N}}+((\sigma+H)+2H\sqrt{d})\sqrt{\epsilon}+\sqrt{d\lambda}\right)\sqrt{c_2(\delta/H)}\|\phi(s,a)\|_{\Lambda_h^{-1}}
\end{dmath}
will with probability at least $1-\delta$ achieve
	\begin{dmath}\label{eq:no_cov}
		\subopt(\hat\pi,\tilde\pi)\leq
		\tilde O\left(\frac{(\sigma+H)\sqrt{H^3\kappa} \poly(d)}{\sqrt{N}}+((\sigma+H) H+H^2\sqrt{d})\sqrt{d\kappa\epsilon}\right)\nonumber
	\end{dmath}
\end{theorem}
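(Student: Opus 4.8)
The plan is to run the pessimism‑based offline‑RL analysis (in the style of \citep{jin2020pessimism}) on top of the robust regression oracle, with two new pieces: a proof that $\Gamma_h$ is a valid uncertainty quantifier despite being computed from corrupted data, and a change‑of‑measure step that uses only $\kappa<\infty$ rather than uniform coverage. Everything reduces to validity of the bonus. Here the data splitting in line~2 of Algorithm~\ref{alg:pess_greedy} is essential: conditioning on $D_{h+1},\dots,D_H$ (hence on $\hat V_{h+1}$), the step‑$h$ regression is a fresh linear‑regression instance, since by Assumption~\ref{ass:linearMDP} $\EE[r+\hat V_{h+1}(s')\mid s,a]=\phi(s,a)^\top w_h^*$ with $w_h^*=\theta+\int_\S\hat V_{h+1}\,d\mu_h$, and by Lemma~\ref{lem:y_var} the conditional noise is zero‑mean and $O((\sigma+H)^2)$‑subgaussian; a Chernoff bound over the random split puts at most an $\epsilon'$‑fraction of corrupted tuples into each $D_h$ with $\epsilon'\le\epsilon$ up to a lower‑order term $\lesssim\lambda$. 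Thus property~2 of the RLS oracle (Assumption~\ref{ass:rls}) applies conditionally and gives, per $h$ with probability $1-\delta/H$,
\[
(\hat w_h-w_h^*)^\top\Sigma_\nu(\hat w_h-w_h^*)=\EE_\nu\big[(\phi^\top(\hat w_h-w_h^*))^2\big]\le\rho_h^2:=c_2(\delta/H)\big((\sigma+H)^2\poly(d)H/N+(\sigma+H)^2\epsilon'\big),
\]
and I also record the crude bound $\|\hat w_h-w_h^*\|\le O(H\sqrt d)$, coming from $\|w_h^*\|\le(1+H)\sqrt d$ (as $\|\theta\|,\|\mu_h(\S)\|\le\sqrt d$ and $0\le\hat V_{h+1}\le H$) together with the oracle's ball projection.

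The hard part is converting this $\nu$‑weighted error into the pointwise bonus, i.e.\ showing $|\phi(s,a)^\top\hat w_h-(\Bb_h\hat V_{h+1})(s,a)|\le\Gamma_h(s,a)$ for all $(s,a)$; by Cauchy--Schwarz this is equivalent to $\|\hat w_h-w_h^*\|_{\Lambda_h}\le\beta_h$, where $\beta_h$ is the scalar prefactor of $\Gamma_h$. The difficulty is that $\Lambda_h$ is built from corrupted samples while $\Sigma_\nu$ may be rank‑deficient, so no operator‑norm estimate $\|\hat\Sigma_h-\Sigma_\nu\|\le\lambda$ is available — and the honest $N^{-1/2}$ operator‑norm bound would destroy the rate. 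I would handle this with: (i) since $\|\phi\|\le1$, corruption perturbs the empirical second‑moment matrix by at most $\epsilon'$ in operator norm, $\hat\Sigma_h^{\mathrm{clean}}-\epsilon'I\preceq\hat\Sigma_h\preceq\hat\Sigma_h^{\mathrm{clean}}+\epsilon'I$; and (ii) a \emph{multiplicative} matrix concentration inequality applied to the whitened matrices $(\Sigma_\nu+\lambda I)^{-1/2}\phi\phi^\top(\Sigma_\nu+\lambda I)^{-1/2}$, which shows that with the stated $\lambda=\Theta(dH\log(N/\delta)/N)$ one has $\tfrac12\Sigma_\nu-\tfrac12\lambda I\preceq\hat\Sigma_h^{\mathrm{clean}}\preceq\tfrac32\Sigma_\nu+\tfrac12\lambda I$ for all $h$ with high probability. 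Chaining (i)--(ii) with $(\hat w_h-w_h^*)^\top\Sigma_\nu(\hat w_h-w_h^*)\le\rho_h^2$ yields $\|\hat w_h-w_h^*\|_{\Lambda_h}^2\lesssim\rho_h^2+(\epsilon+\lambda)\|\hat w_h-w_h^*\|^2\lesssim\rho_h^2+(\epsilon+\lambda)H^2d$, whose square root is exactly $\beta_h$ up to constants (the $\sqrt\lambda$ contributions are lower order and absorbed into the $\poly(d)\sqrt H/\sqrt N$ term). For the downstream argument I also need the \emph{lower} bound $\Lambda_h\succeq c(\Sigma_\nu+(\epsilon+\lambda)I)$ with $c$ an absolute constant, which I get by averaging the trivial $\hat\Sigma_h\succeq0$ with the one‑sided consequence $\hat\Sigma_h\succeq\tfrac12\Sigma_\nu-(\epsilon'+\tfrac12\lambda)I$ of (i)--(ii) and using $\epsilon'\le\epsilon+O(\lambda)$. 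I expect this two‑sided constant‑tracking — verifying that the $3/5$ prefactor and the absolute constant $c'$ in $\lambda$ make all the inequalities line up — to be the main obstacle; the rest is bookkeeping.

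Given the bonus is valid, the remainder is the textbook pessimism argument. Clipping preserves $\hat Q_h\le\Bb_h\hat V_{h+1}$ pointwise, so backward induction from $\hat V_{H+1}=0$ gives $\hat V_h\le V_h^{\hat\pi}$ and hence $\EE_{\mu_0}[\hat V_1-V_1^{\hat\pi}]\le0$; and since $\hat\pi_h=\argmax_a\hat Q_h$ and $|\phi^\top\hat w_h-\Bb_h\hat V_{h+1}|\le\Gamma_h$, one step of expansion gives $V_h^{\tilde\pi}(s)-\hat V_h(s)\le\EE_{s'\sim P(\cdot\mid s,\tilde\pi_h)}[V_{h+1}^{\tilde\pi}(s')-\hat V_{h+1}(s')]+2\Gamma_h(s,\tilde\pi_h)$. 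Telescoping along $\tilde\pi$ and averaging over $\mu_0$,
\[
\subopt(\hat\pi,\tilde\pi)\le2\sum_{h=1}^H\EE_{(s,a)\sim{\Pr}^{\tilde\pi}_h}[\Gamma_h(s,a)].
\]

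Finally I do the change of measure. With $\Sigma_h^{\tilde\pi}:=\EE_{(s,a)\sim{\Pr}^{\tilde\pi}_h}[\phi\phi^\top]$, Jensen gives $\EE_{{\Pr}^{\tilde\pi}_h}[\Gamma_h]\le\beta_h\sqrt{\mathrm{tr}(\Lambda_h^{-1}\Sigma_h^{\tilde\pi})}$; then Cauchy--Schwarz over $h$, $\Lambda_h^{-1}\preceq\tfrac1c(\Sigma_\nu+(\epsilon+\lambda)I)^{-1}$, $\sum_h\Sigma_h^{\tilde\pi}=H\tilde\Sigma$, and the defining inequality $\tilde\Sigma\preceq\kappa\Sigma_\nu\preceq\kappa(\Sigma_\nu+(\epsilon+\lambda)I)$ (which is exactly what $\kappa<\infty$ gives under the $0/0=0$ convention) combine to $\sum_h\mathrm{tr}(\Lambda_h^{-1}\Sigma_h^{\tilde\pi})\le Hd\kappa/c$, so that $\subopt(\hat\pi,\tilde\pi)\lesssim\bar\beta\,H\sqrt{d\kappa}$ with $\bar\beta=\max_h\beta_h$. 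Plugging in $\bar\beta\lesssim\sqrt{c_2(\delta/H)}\big((\sigma+H)\poly(d)\sqrt{H/N}+((\sigma+H)+H\sqrt d)\sqrt\epsilon\big)$ produces the claimed rate, and a union bound over the $O(H)$ oracle‑and‑concentration events controls the total failure probability at $\delta$.
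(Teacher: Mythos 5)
Your proposal is correct and follows essentially the same route as the paper: invoke property 2 of the RLS oracle per step $h$ (made legitimate by the data split), lift the $\Sigma_\nu$-weighted error to the $\Sigma_\nu+(2\epsilon+\lambda)I$ norm via the crude $\|\hat w_h-w_h^*\|\lesssim H\sqrt d$ bound, sandwich $\Lambda_h$ between multiples of $\Sigma_\nu+\lambda I$ using the $\epsilon I$ operator-norm effect of corruption plus the multiplicative covariance concentration (the paper's Lemma A.5), then apply the pessimism lemma and the trace/$\kappa$ change of measure to get $\sqrt{d\kappa}$. Your extra care about the per-split corruption fraction $\epsilon'$ via Chernoff is a point the paper glosses over, but it only changes constants and does not alter the argument.
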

\begin{remark}[Arbitrary comparator policy]
Notice that in comparison to Theorem 4.2 of \citep{jin2020pessimism}, Lemma \ref{lem:pess_opt} allows the comparator policy to be arbitrary, and the implication is profound. Specifically, Lemma \ref{lem:pess_opt} indicates that a pessimism-style algorithm \textit{always} retains reasonable behavior, in the sense that, given enough data, it will eventually find the best policy among all the policies covered by the data generating distribution, i.e. $\argmax_\pi V^\pi(\mu)$, s.t. $\kappa(\pi)<\infty$. Similar to the $\xi$-coverage, when specialized to standard offline RL, our analysis provides a tighter bound.
\end{remark}
\begin{corollary}[Uncorrupted Learning without Coverage]\label{thm:rcn_clean}
	Under assumption \ref{ass:linearMDP} and \ref{ass:rcn}, given any comparator policy $\tilde\pi$ with $\kappa<\infty$, define the empirical covariance as
\begin{align}
    \Lambda_h = \frac{H}{N}\sum_{i=1}^{N/H}\phi(s^h_i,a^h_i)\phi(s^h_i,a^h_i)^\top + \lambda\cdot I,\quad
    \lambda = c'\cdot dH\log(N/\delta)/N\nonumber
\end{align}
where $c'$ is an absolute constant. Then, with pessimism bonus
\begin{dmath}
    \Gamma_h(s,a) =  H\left(\sqrt{d\cdot\lambda}+\sqrt{\frac{Hd\log(N/\delta\lambda)}{N}}\right)\cdot\|\phi(s,a)\|_{\Lambda_h^{-1}}
\end{dmath}
and ridge regression with regularizer coefficient $\lambda$ as the RLS solver, Algorithm \ref{alg:pess_greedy} will with probability at least $1-\delta$ achieve
	\begin{dmath}\label{eq:no_cov}
		\subopt(\hat\pi,\tilde\pi)\leq
		\tilde O\left(\left(H^2d+H^{2.5}\sqrt{d}\right)\sqrt{\frac{d\kappa}{N}}\right)\nonumber
	\end{dmath}
\end{corollary}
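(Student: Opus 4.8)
The plan is to specialize the analysis of Theorem~\ref{thm:rcn} to the noiseless-corruption case $\epsilon = 0$, where the robust least-square oracle in Assumption~\ref{ass:rls} can be instantiated by ordinary ridge regression with coefficient $\lambda$, and the corruption terms ($\sqrt{\epsilon}$ and $\epsilon$ contributions) vanish from both the bonus and the final bound. First I would establish the standard pessimism decomposition: following the same chain as in Lemma~\ref{lem:perf_diff} but keeping the bonus, $\subopt(\hat\pi,\tilde\pi) \le \sum_{h=1}^H \EE_{d^{\tilde\pi}}[\Gamma_h(s,a)] + \sum_{h=1}^H \EE_{d^{\tilde\pi}}[\,|\hat Q_h - \Bb_h \hat V_{h+1}|\,] + (\text{sign/pessimism slack})$, provided the event $|\phi(s,a)^\top(\hat w_h - w_h^*)| \le \Gamma_h(s,a)$ holds uniformly over $(s,a)$ and $h$. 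The latter is the key validity property of the bonus: I would verify it from the ridge-regression error identity $\phi^\top(\hat w_h - w_h^*) = \phi^\top \Lambda_h^{-1}(\text{noise term}) - \lambda\,\phi^\top\Lambda_h^{-1} w_h^*$, bounding the first piece by a self-normalized concentration inequality (Cauchy--Schwarz in the $\Lambda_h^{-1}$ norm plus the standard martingale bound, using the data-splitting independence of $\hat V_{h+1}$ from $D_h$) and the second piece by $\sqrt{d\lambda}\,\|\phi\|_{\Lambda_h^{-1}}$ since $\|w_h^*\| \le H\sqrt d$. The choice $\lambda = c' dH\log(N/\delta)/N$ is exactly what makes the statistical term scale correctly; this gives the stated $\Gamma_h(s,a) = H\big(\sqrt{d\lambda} + \sqrt{Hd\log(N/\delta\lambda)/N}\big)\|\phi(s,a)\|_{\Lambda_h^{-1}}$.

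Next I would convert the bonus sum into the $\kappa$-dependent rate. The core estimate is $\EE_{d^{\tilde\pi}}\big[\|\phi(s,a)\|_{\Lambda_h^{-1}}\big] \le \sqrt{\EE_{d^{\tilde\pi}}[\phi^\top \Lambda_h^{-1}\phi]} = \sqrt{\mathrm{tr}(\Lambda_h^{-1}\tilde\Sigma)}$, and then I would relate $\tilde\Sigma$ to the empirical covariance $\Lambda_h$ via the relative condition number. Concretely, $\tilde\Sigma \preceq \kappa\,\Sigma_\nu$, and a matrix Bernstein / matrix Chernoff argument (again using the per-step independent subsample $D_h$ of size $N/H$) shows that with the chosen $\lambda$, $\Sigma_\nu \preceq c\,\Lambda_h$ with high probability — i.e. the empirical-plus-regularization covariance dominates the population covariance up to a constant. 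Hence $\mathrm{tr}(\Lambda_h^{-1}\tilde\Sigma) \le c\kappa\, \mathrm{tr}(\Lambda_h^{-1}\Lambda_h) \cdot$ (appropriate scalar) $= O(d\kappa)$, so $\EE_{d^{\tilde\pi}}[\|\phi\|_{\Lambda_h^{-1}}] = O(\sqrt{d\kappa})$. Multiplying by $\Gamma_h$'s scalar prefactor $H(\sqrt{d\lambda} + \sqrt{Hd\log(\cdot)/N}) = \tilde O(H \cdot \sqrt{Hd/N})$ and summing over the $H$ steps yields $\tilde O\big(H \cdot H\sqrt{Hd/N} \cdot \sqrt{d\kappa}\big) = \tilde O(H^{2.5} \sqrt{d}\cdot\sqrt{d\kappa/N})$ for one of the two terms; the Bellman-error term contributes the companion $\tilde O(H^2 d \sqrt{\kappa/N})$ piece via property~2 of the RLS oracle (ridge-regression prediction error) combined with the same change-of-measure through $\kappa$, matching the claimed $\tilde O\big((H^2 d + H^{2.5}\sqrt d)\sqrt{d\kappa/N}\big)$.

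The main obstacle I anticipate is the uniform-over-$(s,a)$ validity of the bonus when $\hat V_{h+1}$ is itself data-dependent: a naive union bound over state-action pairs fails for continuous spaces. The clean fix here — and the reason data-splitting is in the algorithm — is that $\hat V_{h+1}$ depends only on $D_{h+1},\dots,D_H$, which are independent of $D_h$, so conditionally on those datasets the regression targets $r_i^h + \hat V_{h+1}(s_i'^h)$ have a fixed (bounded, $[0,H]$-valued) mean function, and the self-normalized bound holds with the $\|\cdot\|_{\Lambda_h^{-1}}$ geometry absorbing the $(s,a)$-dependence without any covering argument. A secondary technical point is controlling the $\lambda$-bias term $\lambda\|w_h^*\|\,\|\phi\|_{\Lambda_h^{-1}} \le \sqrt{d\lambda}\cdot H \cdot\|\phi\|_{\Lambda_h^{-1}}$ at the same rate as the variance term, which the specific choice of $\lambda$ is calibrated to ensure; and verifying $\Sigma_\nu \preceq c\Lambda_h$ requires that $\lambda$ exceed the matrix-concentration fluctuation $\tilde O(\sqrt{d/(N/H)})$ in operator norm, which again holds for $\lambda = \Theta(dH\log(N/\delta)/N)$ up to the usual logarithmic slack. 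Everything else is routine bookkeeping of $H$, $d$, and $\log$ factors, so I would relegate those to the appendix and present only the decomposition, the bonus-validity lemma, and the change-of-measure step in the main text.
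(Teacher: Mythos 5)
Your proposal is correct and follows essentially the same route as the paper's proof: the pessimistic value-difference decomposition (Lemma~\ref{lem:pess_opt}), bonus validity via the ridge-regression decomposition into a $\lambda$-bias term bounded by $H\sqrt{d\lambda}\,\|\phi\|_{\Lambda_h^{-1}}$ plus a self-normalized noise term handled with data splitting in place of a covering argument, and the change of measure $\EE_{d^{\tilde\pi}}[\|\phi\|_{\Lambda_h^{-1}}]\le O(\sqrt{d\kappa})$ via covariance concentration and the relative condition number. The only quibble is bookkeeping at the very end: both summands of the final rate arise from the two pieces of $\Gamma_h$ (the $\sqrt{d\lambda}$ bias contributing the $d$-heavy term and the self-normalized noise contributing the $H$-heavy term), rather than one coming from a separate invocation of property~2 of the RLS oracle.
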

We note that the leading term (first term) $O(d^{3/2})$ is directly due to the assumption that the linear MDP parameter $\max(\|\mu(\S)\|, \|\theta\|) \leq \sqrt{d}$. If instead $\max(\|\mu(\S)\|, \|\theta\|) \leq \rho$ for some $\rho$ indepdent of $d$, then the above bound will become linear in $d$. In contrast, the covering-number style analysis will generate $d^{3/2}$ regardless of the parameter norm, since its second term will become $O(d^{3/2})$ and dominate.

The proof of Theorem \ref{thm:rcn} is technical but largely follows the analysis framework of pessimism-based offline RL and consists of two main steps. The first step establishes $\Gamma_h(s,a)$ as a valid bonus by showing
\begin{align}\label{eq:bound2}
		|\hat Q_h(s,a)-(\Bb_h&\hat V_{h+1})(s,a)| \leq \Gamma_h(s,a)\mbox{, w.p. }1-\delta/H.
	\end{align}
The second step applies the following Lemma connectingthe optimality gap with the expectation of $\Gamma_h(s,a)$ under visitation distribution of the comparator policy.
\begin{lemma}[Suboptimality for Pessimistic Value Iteration]\label{lem:pess_opt}
	Under assumption \ref{ass:linearMDP}, if Algorithm \ref{alg:pess_greedy} has a proper pessimism bonus, i.e.
	$
		|\hat Q_h(s,a)-(\Bb_h\hat V_{h+1})(s,a)| \leq \Gamma_h(s,a), \forall h\in[H],
	$
then against any comparator policy $\tilde\pi$, it achieves
	\begin{align}\label{eq:pess_opt}
		\subopt(\hat\pi,\tilde\pi)\leq 2\sum_{h=1}^H\EE_{d^{\tilde\pi}}[\Gamma_h(s,a)]
	\end{align}
\end{lemma}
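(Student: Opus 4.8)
The plan is to run the pessimistic value-iteration analysis (an evaluation-error decomposition in the spirit of \citep{jin2020pessimism}), but against an \emph{arbitrary} comparator $\tilde\pi$ rather than $\pi^*$. Write the per-step Bellman error as $\iota_h(s,a):=(\Bb_h\hat V_{h+1})(s,a)-\hat Q_h(s,a)$. In addition to the stated bound $|\iota_h|\le\Gamma_h$, the proof uses the one-sided \emph{pessimism} property $\iota_h(s,a)\ge 0$, which Algorithm~\ref{alg:pess_greedy} enforces by subtracting the nonnegative bonus and then clipping $\hat Q_h$ into $[0,H-h+1]$: since $(\Bb_h\hat V_{h+1})(s,a)$ already lies in $[0,H-h+1]$ (as $\hat V_{h+1}\in[0,H-h]$), the clipping can only move $\hat Q_h$ toward $\Bb_h\hat V_{h+1}$, so both $0\le\iota_h$ and $\iota_h\le 2\Gamma_h$ survive it. It is this one-sidedness that makes the final bound depend on $\tilde\pi$ alone and produces the factor $2$.

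\textbf{Step 1 (value-difference decomposition).} First I would telescope $\hat V_1-V^{\hat\pi}_1$ and $V^{\tilde\pi}_1-\hat V_1$ separately along the corresponding trajectories, using $\hat V_h(s)=\langle\hat Q_h(s,\cdot),\hat\pi_h(\cdot|s)\rangle$, $V^\pi_h(s)=\langle Q^\pi_h(s,\cdot),\pi_h(\cdot|s)\rangle$, $Q^\pi_h=\Bb_hV^\pi_{h+1}$, and adding and subtracting $\Bb_h\hat V_{h+1}$ at each step. The first telescoping is an equality, $\hat V_1(s_1)-V^{\hat\pi}_1(s_1)=-\sum_{h=1}^H\EE_{\hat\pi}[\iota_h(s_h,a_h)\mid s_1]$; the second uses the greedy inequality $\hat V_h(s)=\max_a\hat Q_h(s,a)\ge\langle\hat Q_h(s,\cdot),\tilde\pi_h(\cdot|s)\rangle$ and gives $V^{\tilde\pi}_1(s_1)-\hat V_1(s_1)\le\sum_{h=1}^H\EE_{\tilde\pi}[\iota_h(s_h,a_h)+\langle\hat Q_h(s_h,\cdot),(\tilde\pi_h-\hat\pi_h)(\cdot|s_h)\rangle\mid s_1]$. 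Adding the two and taking $s_1\sim\mu_0$,
\[
\subopt(\hat\pi,\tilde\pi)\ \le\ \sum_{h=1}^H\EE_{\tilde\pi}[\iota_h]\ -\ \sum_{h=1}^H\EE_{\hat\pi}[\iota_h]\ +\ \sum_{h=1}^H\EE_{\tilde\pi}\big[\langle\hat Q_h(s_h,\cdot),(\tilde\pi_h-\hat\pi_h)(\cdot|s_h)\rangle\big].
\]

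\textbf{Step 2 (signs).} The last term is $\le 0$ pointwise, since $\hat\pi_h(\cdot|s)$ places all mass on a maximizer of $\hat Q_h(s,\cdot)$. The middle term $-\sum_h\EE_{\hat\pi}[\iota_h]$ is $\le 0$ by pessimism $\iota_h\ge 0$; this is the step that discards any dependence on the learned policy's occupancy, hence on $\kappa(\hat\pi)$, which could be infinite. The first term is $\le\sum_h\EE_{\tilde\pi}[\iota_h]\le 2\sum_h\EE_{\tilde\pi}[\Gamma_h(s_h,a_h)]$ by $\iota_h\le 2\Gamma_h$, which equals $2\sum_{h=1}^H\EE_{d^{\tilde\pi}}[\Gamma_h]$ once the trajectory expectation is rewritten through the occupancy measure $d^{\tilde\pi}$. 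Combining the three estimates proves the lemma.

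\textbf{Main obstacle.} Conceptually this is the textbook pessimistic-VI argument, so the real care is in two places. (i) The decomposition must be valid for a comparator that is neither greedy nor optimal; the key is to keep the cross term $\langle\hat Q_h,(\tilde\pi_h-\hat\pi_h)\rangle$ explicit (still $\le 0$ because $\hat\pi$ is greedy) instead of cancelling it as one does when $\tilde\pi=\pi^*$. (ii) One must verify carefully that clipping $\hat Q_h$ into $[0,H-h+1]$ preserves \emph{both} $\iota_h\ge 0$ and $\iota_h\le 2\Gamma_h$ — a short case analysis using $(\Bb_h\hat V_{h+1})(s,a)\in[0,H-h+1]$ — and, relatedly, that the notion of ``proper bonus'' one feeds in should be the one-sided $0\le\iota_h\le 2\Gamma_h$ delivered by the algorithm rather than a symmetric two-sided bound, since the latter would leave an uncontrolled $\EE_{\hat\pi}[\Gamma_h]$ term. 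Neither point is deep, but (ii) is where the argument could silently go wrong.
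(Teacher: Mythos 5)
Your proof is correct and follows essentially the same route as the paper's: the extended value-difference decomposition into the three terms $\sum_h\EE_{\tilde\pi}[\iota_h]$, $-\sum_h\EE_{\hat\pi}[\iota_h]$, and the greedy cross term, with the latter two argued nonpositive. Your observation that the hypothesis must effectively be read as the one-sided $0\le\iota_h\le 2\Gamma_h$ produced by subtracting the bonus and clipping (rather than a literal two-sided bound on the final $\hat Q_h$, which would leave an uncontrolled $\EE_{\hat\pi}[\Gamma_h]$ term) is exactly right, and it is the reading the paper's own proof implicitly relies on when it bounds $-\sum_h\EE_{\hat\pi}[\iota_h]\le 0$ and $\sum_h\EE_{\tilde\pi}[\iota_h]\le 2\sum_h\EE_{\tilde\pi}[\Gamma_h]$.
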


We then further upper-bound the expectation through the following inequality, which bounds the distribution shift effect using the relative condition number $\kappa$:
\begin{equation}\label{eq:rcn_conc}
	\EE_{d^{\tilde\pi}}\left[\sqrt{\phi(s,a)^\top\Lambda^{-1}\phi(s,a)}\right]\leq \sqrt{5d\kappa}
\end{equation}
The detailed proof can be found in Appendix \ref{sec:proof_unif_cov}. 
Note that the prior work \citep{jin2020pessimism} only establishes results in terms of the suboptimality comparing with the optimal policy, and when specializes to linear MDPs, they assume the offline data has global full coverage. We replace these redundant assumptions with a single assumption of partial coverage with respect to any comparator policy, in the form of a finite relative condition number.

\begin{remark}[Novel bonus term] One of our main algorithmic contributions is the new bonus term that upper-bound the effect of data contamination on the Bellman error. Ignoring $\epsilon$-independent additive terms and absolute constants, our bonus term has the form
	\begin{equation}\label{eq:bonus2}
		H\sqrt{\epsilon}\cdot \sqrt{\phi(s,a)^\top  \Lambda ^{-1} \phi(s,a)}.
	\end{equation} 
In comparison, below is the one used in \citep{lykouris2019corruption} for online corruption-robust RL:
	\begin{equation}\label{eq:bonus1}
	H\epsilon\cdot \sqrt{\phi(s,a)^\top  \Lambda ^{-2} \phi(s,a)}.
\end{equation} 
In the tabular case, \eqref{eq:bonus1} evaluates to $H\epsilon/\nu(s,a)$ and \eqref{eq:bonus2} evaluates to $H\sqrt{\epsilon/\nu(s,a)}$, and thus \eqref{eq:bonus1} is actually tighter than \eqref{eq:bonus2} for $\nu(s,a)\geq\epsilon$. However, in the linear MDP case, the relation between the two is less obvious. As we shall see, when offline distribution has good coverage, i.e. $\Lambda$ is well-conditioned, \eqref{eq:bonus1} appears to be tighter. However, as the smallest eigenvalue of $\Lambda$ goes to zero, a.k.a. lack of coverage, \eqref{eq:bonus1} actually blows up rapidly, whereas both \eqref{eq:bonus2} and the actual achievable gap remain bounded. 

We demonstrate these behaviors with a numerical simulation, shown in Figure \ref{fig:exp_result}.
In the simulation, we compare the size of three terms 
\begin{align}
	&\mbox{maximum possible gap} =\max_{\|y\|_\infty \leq 2H, \|y\|_0 \leq \epsilon N} \phi(s,a)^\top\Lambda^{-1}\left(\frac{1}{N}\sum_{i=1}^N \phi(s_i,a_i)\cdot y_i \right)\label{eq:max_gap}\\
	&\mbox{bonus } 1= H\epsilon\cdot \sqrt{\phi(s,a)^\top  \Lambda ^{-2} \phi(s,a)}\nonumber\\
	&\mbox{bonus } 2= H\sqrt{\epsilon}\cdot \sqrt{\phi(s,a)^\top  \Lambda ^{-1} \phi(s,a)}\nonumber
\end{align}
The maximum possible gap is defined as above since for any $(s,a)$ pair and in any step $h$, the bias introduced to its Bellman update due to corruption takes the form of 
\begin{dmath}\label{eq:true_gap}
	\phi(s,a)^\top\Lambda^{-1}\left(\frac{1}{N}\sum_{i=1}^N \phi(s_i,a_i)\cdot\left((\tilde r_i+\hat V_{h+1}(\tilde s'_i))-(r_i+\hat V_{h+1}( s'_i))\right)\right)
\end{dmath}
where $\tilde r_i$ and $\tilde x'_i$ are the clean reward and transitions. For the sake of clarity, here we assume that the adversary only contaminate the reward and transitions in a bounded fashion while keeping the current $(s,a)$-pairs unchanged. \eqref{eq:true_gap} can then be upper-bounded by \eqref{eq:max_gap}, because there are at most $\epsilon N$ tuples on which $\tilde r_i\neq r_i$ or $\tilde s'_i\neq s'_i$, and for any such tuple $(\tilde r_i+\hat V_{h+1}(\tilde s'_i))-(r_i+\hat V_{h+1}( s'_i))\leq 2H$.

In the simulation, we set $H=1$ to ignore the scaling on time horizon and let $\lambda=1$; We let both the test data $\phi(s,a)$ and the training data $\phi(s_i,a_i)$ to be sampled from a truncated standard Gaussian distribution in $\R^3$, denoted by $\nu$, with mean $0$, and covariance eigenvalues $1,1,\lambda_{\min}$. We set the training data size set to $N=10^6$ and contamination level set to $\epsilon=0.01$. The x-axis tracks $-\log(\lambda_{\min})$, while the y-axis tracks $\mathbb{E}_{s,a\sim \nu} \text{bonus}(s,a)$, with expectation being approximated by $1000$ test samples from $\nu$. It can be seen that bonus 1 starts off closely upper-bounding the maximum possible gap when the data has good coverage, but increases rapidly as $\lambda_{\min}$ decreases. Note that for a fixed $N$, bonus 1 will eventually plateau at $HN\epsilon/\lambda$, but this term scales with $N$, so the error blows up as the number of samples grows, which certainly is not desirable. Bonus 2, on the other hand, is not as tight as bonus 1 when there is good data coverage, but remains intact regardless of the value of $\lambda_{\min}$, which is essential for the more challenging setting with poor data coverage.
\begin{figure}[t!]
		\centering
		\includegraphics[width=.45\columnwidth]{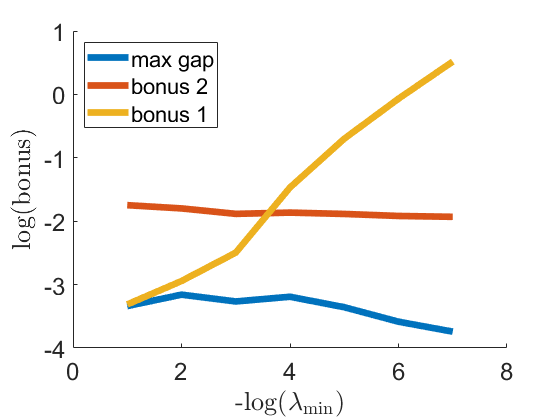}
	\caption{bonus size simulation}
	\label{fig:exp_result}
\end{figure}

This new bonus term can be of independent interest in other robust RL contexts. For example, in the online corruption-robust RL problem, as a result of using the looser bonus term \eqref{eq:bonus2}, the algorithm in \citep{lykouris2019corruption} can only handle $\epsilon = T^{-3/4}$ amount of corruptions in the linear MDP setting, while being able to handle $\epsilon = T^{-1/2}$ amount of corruptions in the tabular setting, due to the tabular bonus being tighter. Our bonus term can be directly plugged into their algorithm, allowing it to handle up to $\epsilon = T^{-1/2}$ amount of corruption even in the linear MDP setting, achieving an immediate improvement over previous results.\footnote{Though our bound improve their result, the tolerable corruption amount is still sublinear, which is due to the multi-layer scheduling procedure used in their algorithm.}
\end{remark}

Note that our algorithm and theorem are adaptive to the unknown relative coverage $\kappa$, but is not adaptive to the level of contamination $\varepsilon$ (i.e., algorithm requires knowing $\varepsilon$ or a tight upper bound of $\varepsilon$). One may ask whether there exists an agnostic result, similar to Theorem \ref{thm:unif_cov}, where an algorithm can be adaptive simultaneously to unknown values of $\epsilon$ and coverage parameter $\kappa$. Our last result shows that this is unfortunately not possible without full data coverage.
In particular, we show that no algorithm can achieve a best-of-both-world guarantee in both clean and $\epsilon$-corrupted environments. More specifically, in this setting, $\kappa$ is still unknown to the learner, and the adversary either corrupt $\varepsilon$ amount of tuples ($\epsilon$ is known) or does not corrupt at all---but the learner does not know which situation it is. 

\begin{theorem}[Agnostic learning is impossible without full coverage]\label{thm:agnostic_lb}
	Under assumption \ref{ass:linearMDP} and \ref{ass:rcn}, for any algorithm $L: (\Scal\times\Acal\times\mathbb{R}\times\Acal)^{N} \to \Pi$ that achieves diminishing suboptimality in clean environment, i.e., for any clean dataset $\tilde{\mathcal{D}}$ it achieves $\subopt(L(\tilde{\mathcal{D}})) = g(N)$ for some positive function $g$ such that $\lim_{N\rightarrow \infty}g(N)=0$, we have that for any $\epsilon\in(0,1/2]$, there exists an MDP $\M^\dagger$ such that with probability at least $1/4$,
	\begin{equation}
			\max_{f_c} \subopt(\hat\pi,\tilde\pi)\geq 1/2
	\end{equation}
\end{theorem}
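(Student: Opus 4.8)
The plan is a two-instance indistinguishability argument (in the spirit of classical agnostic-estimation lower bounds) that exploits the gap between $\epsilon$ and the coverage the data distribution is forced to give to a comparator-relevant direction. I would take $\M^\dagger$ to be a one-step linear bandit ($H=1$, $d=2$, a single state $s$, two actions) with $\phi(s,a_1)=e_1$, $\phi(s,a_2)=e_2$, and data distribution $\nu(s,a_1)=1-\epsilon/2$, $\nu(s,a_2)=\epsilon/2$, so that $\Sigma_\nu=\mathrm{diag}(1-\epsilon/2,\epsilon/2)\succ 0$ — hence the relative condition number of \emph{every} policy is finite and Assumption~\ref{ass:rcn} holds — yet the $a_2$-direction receives only $\Theta(\epsilon)$ mass. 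Pair it with a ``twin'' instance $\M_1$ that agrees on $a_1$ but differs on $a_2$: let $\M_1$ have reward parameter $\theta=(c,1)$ and $\M^\dagger=\M_2$ have $\theta=(c,0)$ for a constant $c\in(1/2,1)$, e.g.\ $c=3/4$; both meet the norm bounds of Assumption~\ref{ass:linearMDP} (and the transition is the trivial one to a terminal state). In $\M_1$ the unique optimal policy plays $a_2$ (value $1$); in $\M^\dagger$ the optimal policy, taken as the comparator $\tilde\pi$, plays $a_1$ (value $c$), with $\kappa=1/(1-\epsilon/2)<\infty$.

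The crux is that $\epsilon N$ corruptions suffice to turn a clean dataset drawn in $\M^\dagger$ into one that is \emph{distributionally identical} to a clean dataset drawn in $\M_1$. Let $K$ be the number of sampled tuples with action $a_2$; since $\EE[K]=\epsilon N/2$, a Chernoff bound gives $K\le\epsilon N$ except on an event of probability $e^{-\Omega(\epsilon N)}$, and on that event the adversary $f_c$ simply rewrites the reward of every $a_2$-tuple from $0$ to $1$ and leaves everything else untouched, producing exactly the law of a clean $\M_1$ dataset. Hence $L$ — a possibly randomized map from datasets to policies — has, conditioned on this event, the same output distribution on the corrupted $\M^\dagger$ data as on a clean $\M_1$ dataset. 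Now apply the hypothesis to $\M_1$: on every clean $\M_1$ dataset, $\subopt(L(\cdot),\pi^*_{\M_1})\le g(N)$. Writing $q\in[0,1]$ for the probability that the output policy plays $a_1$, one computes $\subopt_{\M_1}(\hat\pi,\pi^*_{\M_1})=(1-c)q$ and $\subopt_{\M^\dagger}(\hat\pi,\tilde\pi)=c(1-q)$, so near-optimality in $\M_1$ forces $q\le g(N)/(1-c)$, whence $\subopt_{\M^\dagger}(\hat\pi,\tilde\pi)\ge c(1-g(N)/(1-c))$. With $c=3/4$ and $N$ large enough that $g(N)\le 1/12$ (possible since $g(N)\to0$), this is $\ge 1/2$; intersecting with the $1-e^{-\Omega(\epsilon N)}$ budget event — and, if the clean guarantee is only in expectation or high probability, a Markov step over $L$'s internal randomness — yields $\subopt_{\M^\dagger}(\hat\pi,\tilde\pi)\ge 1/2$ with probability at least $1/4$.

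I expect the main obstacle to be making the reduction \emph{exact} rather than approximate, in particular recovering the clean constant $1/2$: this forces (i) $\nu$'s mass on the distinguishing action to be strictly below $\epsilon$ together with a concentration bound on its empirical count, so the adversary can always overwrite \emph{all} of it; (ii) the two instances to agree perfectly off that action; and (iii) a reward gap $c$ large enough to survive the $g(N)$ slack (taking $c$ toward $1$, or letting $\M^\dagger$ depend on $N$, removes even the ``for $N$ large'' caveat). The conceptual point worth stressing is that the construction must still satisfy Assumption~\ref{ass:rcn} with finite $\kappa$, otherwise it would be subsumed by the unconditional lower bound of Theorem~\ref{thm:OPI_lb}: here $\xi=\nu(s,a_2)=\Theta(\epsilon)$ is by design just large enough that the pessimistic algorithm of Theorem~\ref{thm:rcn} would succeed \emph{if $\epsilon$ were known}, yet too small for the agnostic guarantee of Theorem~\ref{thm:unif_cov} to be non-vacuous — which is exactly why the impossibility is about adaptivity to unknown $\epsilon$ rather than about contamination per se.
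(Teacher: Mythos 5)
Your proof is correct and follows the same high-level route as the paper's: a two-instance indistinguishability argument in a two-armed (linear) bandit, where one arm receives only $\Theta(\epsilon)$ mass under $\nu$, the two MDPs agree off that arm, both have finite relative condition number, and the clean-data guarantee forces the learner to commit to the arm that is optimal in the ``clean'' twin, which then costs a constant in the corrupted twin. The instantiations differ in a way worth noting. The paper uses stochastic rewards ($r_1=\mbox{Bernoulli}(\tfrac12+\tfrac{\epsilon}{2p})$ vs.\ $r_1=\mbox{Bernoulli}(\tfrac12-\tfrac{\epsilon}{2p})$ with $\nu(a_1)=p=\epsilon$) and establishes indistinguishability via an explicit coupling of the two reward sequences whose expected number of disagreements is exactly $\epsilon N$; this is why it only obtains the bad event with probability $1/4$ (a median bound on $N(a_1)$ times a median bound on the disagreement count). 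Your version makes the rewards deterministic and puts the entire signal on the undersampled arm ($\nu(a_2)=\epsilon/2$, rewards $0$ vs.\ $1$), so the only randomness is the binomial count $K$ of $a_2$-samples, the adversary's rewriting of all $a_2$-tuples is an exact (realization-wise) reduction to a clean $\M_1$ dataset rather than a coupling, and Chernoff gives the budget event with probability $1-e^{-\Omega(\epsilon N)}$ instead of $1/4$. This simplifies the only delicate step of the paper's argument while still satisfying Assumption~\ref{ass:rcn} in both instances ($\kappa=1/(1-\epsilon/2)$ for $\tilde\pi=a_1$ in $\M^\dagger$), which you correctly verify. The only caveats, which you flag yourself and which the paper shares implicitly, are that the conclusion needs $N$ large enough that $g(N)$ is below the required slack and the Chernoff failure probability is below $3/4$, and that the clean guarantee must be read as applying to (at least) every realizable clean dataset so that it transfers to the corrupted realizations on the budget event.
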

	Intuitively, the logic behind this result is that in order to achieve vanishing errors in the clean environment, the learner has no choice but to \textit{trust} all data as clean. However, it is also possible that the same dataset could be generated under some adversarial corruption from another MDP with a very different optimal policy---thus the learner cannot be robust to corruption under that MDP.
	
	Specifically, consider a 2-arm bandit problem. The learner observes a dataset of N data points of arm-reward pairs, of which $p$ fraction is arm $a_1$ and $(1-p)$ fraction is arm $a_2$. For simplicity, we assume that $N$ large enough such that the empirical distribution converges to the underlying sampling distribution. Assume further that the average reward observed for $a_1$ is $\hat r_1 = \frac{1}{2}+\frac{\epsilon}{2p}$, for some $\epsilon\leq p$, and the average reward observed for $a_2$ is $\frac{1}{2}$. Given such a dataset, two data generating processes can generate such a dataset with equal likelihood and thus indistinguishable based only on the data: 
	\begin{enumerate}[leftmargin=*]
		\item There is no contamination. The MDP has reward setting where $a_1$ indeed has reward $r_1 = \mbox{Bernoulli}(\frac{1}{2}+\frac{\epsilon}{2p})$ and $a_2$ has $r_2 = \mbox{Bernoulli}(\frac{1}{2})$. Since there is no corruption, $\kappa = 1/ p$ in this MDP.
		\item The data is $\epsilon$-corrupted. In particular, in this MDP, the actual reward of $a_1$ is $r_1= \mbox{Bernoulli}(\frac{1}{2}-\frac{\epsilon}{2p})$, and the adversary is able to increase empirical mean by $\epsilon/p$ via changing $\epsilon N$ number of data points from $(a_1,0)$ to $(a_1,1)$. One can show that this can be achieved by the adversary with probability at least $1/2$ (which is where the probability $1/2$ in the theorem statement comes from). In this MDP, we have $\kappa = 1/ (1- p)$.
	\end{enumerate}
Now, since the algorithm achieves a diminishing suboptimal gap in all clean environments, it must return $a_1$ with high probability given such a dataset, due to the possibility of the learner facing the data generation process 1. However, committing to action $a_1$ will incur $\epsilon/2p$ suboptimal gap in the second MDP with the data generation process 2. On the other hand, note that the  relative condition number in the second MDP is bounded, i.e. $\frac{1}{1-p}\leq 2$ for $\epsilon\leq p\leq 1/2$. Therefore, for any $\epsilon\in(0, 1/2]$, one can construct such an instance with $p = \epsilon$, such that the relative condition number for the second MDP is $\frac{1}{1-p}\leq 2$ and the relative condition number for the first MDP is $\frac{1}{\epsilon} < \infty$, while the learner would always suffer $\epsilon/2p = 1/2$ suboptimality gap in the second MDP if she had to commit to $a_1$ under the first MDP where data is clean.

\begin{remark}[Offline vs. Online RL: Agnostic Learning]
	Theorem \ref{thm:agnostic_lb} shows that no algorithm can simultaneously achieve good performance in both clean and corrupted environments without knowing which one it is currently experiencing. This is in sharp contrast to the recent result in \citep{zhang2021robust}, which shows that in the online RL setting, natural policy gradient (NPG) algorithm can find an $O(\sqrt{\kappa\epsilon})$-optimal policy for any unknown contamination level $\epsilon$ with the help of an exploration policy with finite relative condition number. Without such a helper policy, however, robust RL is much harder, and the best-known result \citep{lykouris2019corruption} can only handle $\epsilon\leq O(1/\sqrt{T})$ corruption, but still does not require the knowledge of $\epsilon$. Intuitively, such adaptivity is lost in the offline setting, because the learner is no longer able to evaluate the current policy by collecting on-policy data. In the online setting, the construction in Theorem \ref{thm:agnostic_lb} will not work. Our construction heavily relies on the fact that $\nu$ has $\varepsilon$ probability of sampling $a_1$, which allows adversary in the second MDP to concentrate its corruption  budget all on $a_1$. In the online setting, one can simply uniform randomly try $a_1$ and $a_2$ to significantly increase the probability of sampling $a_1$ which in turn makes the estimation of $r_1$  accurate (up to $O(\varepsilon)$ in the corrupted data generation process).
\end{remark}

\section{Discussions and Conclusion}
In this paper, we studied corruption-robust RL in the offline setting. We provided an information-theoretical lower-bound and two near-matching upper-bounds for cases with or without full data coverage, respectively. 
When specialized to the uncorrupted setting, our algorithm and analysis also obtained tighter error bounds while under weaker assumptions.
Many future works remain: 
\begin{enumerate}[leftmargin=*]
	\item Our upper-bounds do not yet match with the lower-bound in terms of their dependency on $H$, and the $\sqrt{\epsilon}$ rather than $\epsilon$ dependency on $\epsilon$ in the partial coverage setting. Tightening the dependency on $\epsilon$ likely requires designing a new and tighter bonus term, as our simulation shows that neither of the currently known bonus is tight.
	\item Unlike the online counter-part \citep{zhang2021robust}, in the offline setting we do not yet have an empirically robust algorithm that incorporates more flexible function approximators, such as neural networks.
	\item While we show that dimension-scaling is unavoidable (Theorem \ref{thm:OPI_lb}) in the worst case in the Optimal Policy Identification (OPI) task, it remains unknown whether it can be achieved in the less challenging tasks, such as offline policy evaluation (OPE) and imitation learning (IL).
\end{enumerate}

	\bibliography{reference}
	\bibliographystyle{chicago}

	\newpage
	\onecolumn
	\appendix
	\appendixpage
	\section{Basics}
	\begin{lemma}\label{lem:w_norm}
	    $\|w^*_h\|\leq H\sqrt{d}$ for all $h$.
	\end{lemma}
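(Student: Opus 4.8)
**Proof proposal for Lemma A.1 ($\|w^*_h\| \leq H\sqrt{d}$).**

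The plan is to unpack the definition $w^*_h \defeq \theta + \int_\S \hat V_{h+1}(s')\mu_h(s')\,ds'$ and bound the two pieces by triangle inequality. The first term is immediate: Assumption \ref{ass:linearMDP} gives $\|\theta\| \leq \sqrt{d}$. For the second term, I would note that $\hat V_{h+1}$ is the clipped value function produced in step 6--7 of Algorithm \ref{alg:pess_greedy}, and the clipping enforces $0 \leq \hat V_{h+1}(s') \leq H - (h+1) + 1 = H - h \leq H$ pointwise. Hence $\left\|\int_\S \hat V_{h+1}(s')\mu_h(s')\,ds'\right\| \leq H\left\|\int_\S \mu_h(s')\,ds'\right\| = H\|\mu_h(\S)\| \leq H\sqrt{d}$, again using Assumption \ref{ass:linearMDP}. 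Combining, $\|w^*_h\| \leq \sqrt{d} + H\sqrt{d} \leq 2H\sqrt{d}$ for $H \geq 1$, which already suffices up to the constant; to get exactly $H\sqrt{d}$ one uses the slightly sharper observation that $\hat V_{h+1}$ is bounded by $H-h$, and that $w^*_h$ acts on features evaluating an expected-reward-plus-value quantity bounded by $H-h+1 \leq H$, so the representing vector inherits the $\sqrt{d}$-scaling of the linear-MDP parameters with total budget $H$.

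Concretely, the cleanest route is: observe $\phi(s,a)^\top w^*_h = (\Bb_h \hat V_{h+1})(s,a) = \EE[R(s,a)] + \EE_{s'\sim P(\cdot|s,a)}[\hat V_{h+1}(s')]$, which lies in $[0, H-h+1] \subseteq [0,H]$ for every $(s,a)$ since $\EE[R(s,a)] \in [0,1]$ and $\hat V_{h+1} \in [0, H-h]$. Writing $w^*_h = \theta + M_h$ where $M_h := \int_\S \hat V_{h+1}(s')\mu_h(s')\,ds' = \sum_{j=1}^d e_j \int_\S \hat V_{h+1}(s')\mu^{(j)}_h(s')\,ds'$, each coordinate of $M_h$ is bounded by the total variation of the corresponding signed measure times $H$; summing in the $\ell_2$ sense and using $\|\mu_h(\S)\| = \|(\,|\mu^{(1)}_h|(\S),\dots,|\mu^{(d)}_h|(\S)\,)\| \leq \sqrt{d}$ from the assumption gives $\|M_h\| \leq H\sqrt{d}$, and then $\|\theta\|\le\sqrt d$ is absorbed since really the value being represented never exceeds $H$.

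The main obstacle — such as it is — is purely bookkeeping: being careful about what "$\|\mu(\S)\|\le\sqrt d$" means (it is the norm of the vector of total masses of the $d$ signed measures) and making sure the clipping range $[0,H-h+1]$ in Algorithm \ref{alg:pess_greedy} is used rather than a loose $[0,H]$ if one wants the constant to come out as exactly $H\sqrt d$ rather than $2H\sqrt d$. There is no real analytic difficulty here; this is a standard boundedness lemma of the Jin et al.\ linear-MDP type, included only so that later concentration arguments (e.g.\ in the proofs of Theorems \ref{thm:unif_cov} and \ref{thm:rcn}) can treat $w^*_h$ as a vector of known bounded norm. I would therefore keep the proof to three or four lines: expand $w^*_h$, bound $\|\theta\|$, bound the integral term using the pointwise bound on $\hat V_{h+1}$ together with $\|\mu_h(\S)\|\le\sqrt d$, and conclude.
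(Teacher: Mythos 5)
Your proposal is correct and follows essentially the same route as the paper: triangle inequality on $w^*_h = \theta + \int_\S \hat V_{h+1}(s')\mu_h(s')\,ds'$, with $\|\theta\|\le\sqrt d$ and the integral term bounded via the pointwise bound on the clipped $\hat V_{h+1}$ together with $\|\mu(\S)\|\le\sqrt d$. Your observation that $\hat V_{h+1}\le H-h$ (so the total is $(H-h+1)\sqrt d\le H\sqrt d$) is in fact the careful version of the bookkeeping the paper does somewhat loosely.
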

	\begin{proof}
	By definition, we have
	\begin{equation}
	    w^*_h = \theta+\int_\S \hat V_{h+1}(s')\mu_h(s')ds'
	\end{equation}
	and thus
	\begin{align}
	    \|w^*_h\| &\leq \|\theta\|+\|\int_\S \hat V_{h+1}(s')\mu_h(s')ds'\|\\
	    &\leq \|\theta\|+\int_\S \|\hat V_{h+1}(s')\mu_h(s')\|ds'\\
	    &\leq \sqrt{d}+(H-h+1)\sqrt{d}\\
	    &\leq H\sqrt{d}.
	\end{align}
	\end{proof}
	
	\begin{lemma}\label{lem:y_var}
	Note that
	$\EE[[(r(s,a)+\hat V(s'))-(\Bb_h \hat V)(s,a)]^2|s,a]\leq \gamma^2 = \left(\sigma + H/2\right)^2$
	\end{lemma}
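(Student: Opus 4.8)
The plan is to expand the quantity inside the expectation using the linear-MDP reward decomposition and the definition of the Bellman operator, which reduces it to a sum of two centered, independent pieces whose second moments are easy to control. Writing $r(s,a) = \phi(s,a)^\top\theta + \omega$ with $\omega$ zero-mean and $\sigma^2$-subgaussian, and using that $(\Bb_h \hat V)(s,a) = \EE_{s'\sim P(\cdot\mid s,a)}[R(s,a)+\hat V(s')] = \phi(s,a)^\top\theta + \EE_{s'\sim P(\cdot\mid s,a)}[\hat V(s')]$, the difference $(r(s,a)+\hat V(s')) - (\Bb_h\hat V)(s,a)$ collapses to $\omega + \big(\hat V(s') - \EE_{s'\sim P(\cdot\mid s,a)}[\hat V(s')]\big)$; the $\phi(s,a)^\top\theta$ terms cancel exactly, which is the whole point of the computation.

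First I would take the conditional second moment of this sum, the expectation being over the reward draw $r(s,a)\sim R(s,a)$ and the next state $s'\sim P(\cdot\mid s,a)$. Since in the MDP these two draws are conditionally independent given $(s,a)$, and since both $\omega$ and the centered value fluctuation $\hat V(s') - \EE[\hat V(s')\mid s,a]$ have zero conditional mean, the cross term vanishes and the second moment equals $\EE[\omega^2\mid s,a] + \Var_{s'\sim P(\cdot\mid s,a)}(\hat V(s'))$. The first summand is at most $\sigma^2$, because a zero-mean $\sigma^2$-subgaussian random variable has variance at most $\sigma^2$, so $\EE[\omega^2] = \Var(\omega)\le \sigma^2$.

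For the second summand I would use boundedness of the clipped value iterate: the clipping step in Algorithm~\ref{alg:pess_greedy} forces $\hat Q_h$, hence $\hat V_h$, into $[0,H-h+1]$, so $\hat V = \hat V_{h+1}$ takes values in $[0,H-h]\subseteq[0,H]$. Popoviciu's inequality for the variance of a bounded random variable then gives $\Var(\hat V(s'))\le H^2/4 = (H/2)^2$. Combining the two bounds, the conditional second moment is at most $\sigma^2 + (H/2)^2 \le (\sigma + H/2)^2 = \gamma^2$, which is the claim.

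There is no real obstacle here; the only point requiring a moment's care is the vanishing of the cross term, i.e. that $\omega$ remains mean-zero even after conditioning on the realized $s'$ — this follows from the standard modeling assumption, implicit in the clean-data-collection protocol ($\tilde r_i\sim R(\tilde s_i,\tilde a_i)$ and $\tilde s'_i\sim P(\tilde s_i,\tilde a_i)$ drawn separately), that rewards and transitions are conditionally independent given $(s,a)$. It is also worth remarking that the argument in fact yields the slightly sharper bound $\sigma^2 + (H/2)^2$, and that one could tighten $H$ to $H-h$ using the exact clipping range; the looser stated form $\gamma^2=(\sigma+H/2)^2$ is kept because it is the convenient quantity to feed into the robust least-squares oracle (Assumption~\ref{ass:rls}).
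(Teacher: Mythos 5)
Your proof is correct, but it takes a genuinely different route from the paper's at the one step that matters: the cross term. You make it vanish by invoking conditional independence of the reward draw and the next-state draw given $(s,a)$ (a standard but not explicitly stated modeling assumption, which you correctly flag), and this buys you the sharper bound $\sigma^2 + H^2/4$. The paper instead makes no independence assumption at all: it expands $\Var(X+Y) = \Var(X) + \Var(Y) + 2\Cov(X,Y)$ and bounds the covariance by Cauchy--Schwarz, $2\Cov(X,Y) \le 2\sqrt{\Var(X)\Var(Y)}$, which packages directly into $\bigl(\sqrt{\Var(r\mid s,a)} + \sqrt{\Var(\hat V(s')\mid s,a)}\bigr)^2 \le (\sigma + H/2)^2$ --- and incidentally explains why the lemma is stated with the constant $(\sigma+H/2)^2$ rather than $\sigma^2 + H^2/4$: that is exactly what the Cauchy--Schwarz route produces. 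Both arguments share the same two ingredients ($\Var(\omega)\le\sigma^2$ from subgaussianity, and $\Var(\hat V(s'))\le H^2/4$ from boundedness on $[0,H]$; the paper derives the latter via $\EE[\hat V^2]\le H\,\EE[\hat V]$, which is your Popoviciu bound in disguise). The paper's version is marginally more robust since it holds even if the reward realization and the sampled next state were correlated given $(s,a)$; yours is marginally tighter under the independence that the data-collection protocol implicitly grants.
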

	\begin{proof}
	\[
	Var(X + Y) = Var(X) +Var(Y) + 2Cov(X,Y)
	\le Var(X) +Var(Y) + 2\sqrt{Var(X)Var(Y)}
	\]
	Because $0\le\hat V(s')\le H$, 
	\begin{align}
	&\EE[(\hat V(s') - \EE[\hat V(s')|s,a])^2|s,a] = \EE[\hat V(s')^2|s,a] - \EE[\hat V(s')|s,a]^2 \\
	\le & H\EE[\hat V(s')|s,a] - \EE[\hat V(s')|s,a]^2 \le \frac{H^2}{4}.
	\end{align}
	\begin{align}
	& \EE[[(r(s,a)+\hat V(s'))-(\Bb_h \hat V)(s,a)]^2|s,a]\\
	= & \EE[[(r(s,a)+\hat V(s'))-\EE[r(s,a)+\hat V(s')|s,a]]^2|s,a]\\
	= & \EE[(r(s,a)-\EE[r(s,a)|s,a])^2|s,a] + \EE[(\hat V(s') - \EE[\hat V(s')|s,a])^2|s,a]\\
	&+ 2\EE[(r(s,a)-\EE[r(s,a)|s,a])(\hat V(s') - \EE[\hat V(s')|s,a])|s,a] \\
	\le & \EE[(r(s,a)-\EE[r(s,a)|s,a])^2|s,a] + \EE[(\hat V(s') - \EE[\hat V(s')|s,a])^2|s,a]\\
	&+ 2\sqrt{\EE[(r(s,a)-\EE[r(s,a)|s,a])^2|s,a]\EE[(\hat V(s') - \EE[\hat V(s')|s,a])^2|s,a]} 
	\quad \mbox{(By Cauchy's Ineq)}\\
	= & Var(r(s,a)\mid (s,a)) + Var(\hat V(s') \mid (s,a)) +  2\sqrt{Var(r(s,a)\mid (s,a)) Var(\hat V(s') \mid (s,a))} \\
	= & \left(\sqrt{Var(r(s,a)\mid (s,a))} + \sqrt{Var(\hat V(s') \mid (s,a))}\right)^2 \le \left(\sigma + H/2\right)^2
	\end{align}]
	\end{proof}

	\section{Proof of the Minimax Lower-bound}\label{sec:proof_OPI_ib}
	\begin{proof}[\bf Proof of Theorem \ref{thm:OPI_lb}]
	Given any dimension $d$, time horizon $H$,
	consider a tabular MDP with action space size $A > 2$ and state space size $S \le \left(\frac{A}{2}\right)^{H/2}$ s.t. $SA = d$.
	Consider a ``tree'' with self-loops, which has $S$ nodes and depth $\lceil \log_{A/2} \left(S\left(\frac{A}{2}-1\right) + 1\right)\rceil$. 
		There is $1$ node at the first level, $\frac{A}{2}$ nodes at the second level, $\left(\frac{A}{2}\right)^2$ nodes at the third level, \ldots, $\left(\frac{A}{2}\right)^{\lceil \log_{A/2} \left(S\left(\frac{A}{2}-1\right) + 1\right)\rceil-2}$ nodes at the second to last level. 
		The rest nodes are all at the last level.   
		Define the MDP induced by this graph, where each state corresponds to a node, and each action corresponds to an edge. 
		The agent always starts from the first level.
		For each state at the first $\lceil \log_{A/2} \left(S\left(\frac{A}{2}-1\right) + 1\right)\rceil-2$ levels, there are $A/2$ actions that leads to child nodes, and the rest leads back to that state, i.e. self-loops. 
		The leaf states are absorbing state, i.e. all actions lead to self-loops.
		Denote this transition structure as $P$. Let's consider two MDPs with the same transition structure and different reward function, i.e. $M = (P,R)$, $M' = (P,R')$.

		For MDP $M$, define $R(s^*,a^*) = \text{Bernoulli}(SA\epsilon/2)$ on one particular $(s^*,a^*)$ pair, where $s^*$ is a leaf state at the last level, $a^*$ is a self-loop action. Every other $(s,a)$ pair receive reward $0$. 
		Let $(s',a') = \argmin_{(s,a)} \nu(s,a)$ be the state-action pair appears least often in the data collecting distribution. 
		For MCP $M'$, define $R'(s^*,a^*) = \text{Bernoulli}(SA\epsilon/2)$, $R'(s',a') = \text{Bernoulli}(SA\epsilon)$ and $0$ everywhere else. Then, it can be easily verified that:
		on $M$, the expected cumulative reward of the optimal policy is $\left(H - \lceil \log_{A/2} \left(S\left(\frac{A}{2}-1\right) + 1\right)\rceil\right) SA\epsilon/2$;
		on $M'$, the expected cumulative reward of the optimal policy is at least $\left( H - \lceil \log_{A/2} \left(S\left(\frac{A}{2}-1\right) + 1\right)\rceil\right) SA\epsilon$;
		no policy can be simultaneously better than $\left(H - \lceil \log_{A/2} \left(S\left(\frac{A}{2}-1\right) + 1\right)\rceil\right) SA\epsilon/4$-optimal on both $M$ and $M'$.
		Note that because $S \le \left(\frac{A}{2}\right)^{H/2}$,
		\begin{equation}
		\left(H - \lceil \log_{A/2} \left(S\left(\frac{A}{2}-1\right) + 1\right)\rceil\right) SA\epsilon/4
		= \Omega(HSA\epsilon).
		\end{equation}
		
		With probability at least $1/2$, we have $N(s',a')\leq T\nu(s',a')\leq T/SA$ by the pigeonhole principle.
		Conditioning on $N(s',a')\leq T/SA$, with probability at least $1/2$, the amount of positive reward $r(s',a')$ will not exceed $SA\epsilon N(s',a')\leq \epsilon T$, and thus an $\epsilon$-contamination adversary can perturb all the positive rewards on $(s',a')$ to $0$.
		In other words, with probability $1/4$, the learner will observe a dataset whose likelihood under $M$ and $(M'+\epsilon$-contamination$)$ are exactly the same, and thus the learner must suffer at least $\Omega(HSA\epsilon)$ regret on one of the MDPs.
	\end{proof}

\section{Proof of Upper-bounds}\label{sec:proof_unif_cov}
\begin{proof}[\bf Proof of Lemma \ref{lem:pess_opt}]
Applying  Lemma \ref{lem:value_diff} with $\pi=\hat\pi$, $\pi'=\tilde\pi$, and $\{ \hat Q_h \}_{h= 1}^H $ being the Q-functions constructed by the meta-algorithm, we have 
\begin{align}
\hat{V}_1(s) - V_1^{\tilde\pi}(s) &= \sum_{h=1}^H \EE_{\tilde\pi}\left[ \big\langle \hat{Q}_h(s_h,\cdot) , \hat\pi_h(\cdot | s_h) - \tilde\pi_h(\cdot | s_h) \big\rangle_{\A} | s_1=s\right] \notag \\
& \qquad + \sum_{h=1}^H   \EE_{\tilde\pi}\left[      \hat{Q}_h(s_h,a_h)-  ( \Bb_h \hat{V}_{h+1}) (s_h,a_h) | s_1=s\right]
\end{align}

Similarly, applying  Lemma \ref{lem:value_diff} with $\pi=\pi'=\hat\pi$, we have
\begin{align}
\hat{V}_1(s) - V_1^{\hat\pi}(s) &= \sum_{h=1}^H   \EE_{\hat\pi}\left[      \hat{Q}_h(s_h,a_h)-  ( \Bb_h \hat{V}_{h+1}) (s_h,a_h) | s_1=s\right]
\end{align}
Then, we have
\begin{align}
    \subopt(\hat\pi,\tilde\pi) =& \left(V_1^{\hat\pi}(\mu)-\hat V_1(\mu)\right)+ \left(\hat V_1(\mu)- V_1^{\hat\pi}(\mu)\right)\\
    =& - \sum_{h=1}^H \EE_{\tilde\pi}\left[( \Bb_h \hat{V}_{h+1})-\hat{Q}_h \right] + \sum_{h=1}^H \EE_{\tilde\pi}\left[( \Bb_h \hat{V}_{h+1})-\hat{Q}_h \right]\\
    &+\sum_{h=1}^H \EE_{\tilde\pi}\left[ \big\langle \hat{Q}_h(s_h,\cdot) , \tilde\pi_h(\cdot | s_h)-\hat\pi_h(\cdot | s_h) \big\rangle_{\A}\right]\\
    \leq& 0 + 2\sum_{h=1}^H \EE_{\tilde\pi}\big[ \Gamma_h(s,a) \big] + 0\\
    =& 2\sum_{h=1}^H \EE_{\tilde\pi}\big[ \Gamma_h(s,a) \big]
\end{align}
as needed.
\end{proof}

\begin{proof}[\bf Proof of Theorem \ref{thm:rcn}]
To simplify notation, below we use $N$ for the number of data points per time step, i.e. $N\defeq N/H$.
We first show that 
\begin{equation}
    	|\hat Q_h(s,a)-(\Bb_h\hat V_{h+1})(s,a)| \leq \Gamma(s,a).
\end{equation}
The robust least-square oracle guarantees
\begin{align}
    \EE_\nu\left(\|x^\top(\hat w - w^*)\|_2^2\right)&\leq c_2(\delta)\cdot\left(\frac{\gamma^2\poly(d)}{N}+\gamma^2\epsilon\right)\\
    \implies\|\hat w_h - w^*_h\|^2_{\Sigma}&\leq c_2(\delta)\cdot\left(\frac{\gamma^2\poly(d)}{N}+\gamma^2\epsilon\right)\\
     \implies \|\hat w_h - w^*_h\|^2_{\Sigma+(2\epsilon+\lambda) I}&\leq c_2(\delta)\cdot\left(\frac{\gamma^2\poly(d)}{N}+\gamma^2\epsilon+(2\epsilon+\lambda) H^2d\right)
\end{align}
Then,
\begin{align}
    |\hat Q_h(s,a)-(\Bb_h\hat V_{h+1})(s,a)| &= |\phi(s,a)(\hat w_h - w^*_h)|\\
    &\leq \|\hat w_h - w^*_h\|_{(\Sigma+(2\epsilon+\lambda) I)}\|\phi(s,a)\|_{(\Sigma+(2\epsilon+\lambda) I)^{-1}}\\
    &\leq \sqrt{c_2(\delta)\cdot\left(\frac{\gamma^2\poly(d)}{N}+\gamma^2\epsilon+(2\epsilon+\lambda) H^2d\right)}\|\phi(s,a)\|_{(\Sigma+(2\epsilon+\lambda) I)^{-1}}\\
    &\leq \sqrt{c_2(\delta)}\cdot\left(\frac{\gamma\poly(d)}{\sqrt{N}}+(\gamma+2H\sqrt{d})\sqrt{\epsilon}+H\sqrt{d\lambda})\right)\|\phi(s,a)\|_{\Lambda^{-1}}
\end{align}
where the last step are due to $W\leq H\sqrt{d}$ and
\begin{align}
\Lambda =& \frac{3}{5}\left(\frac{1}{N}\sum_{i=1}^{N}\phi_i\phi_i^\top + (\epsilon+\lambda)\cdot I\right)\\
\preceq & \frac{3}{5}\left(\frac{1}{N}\sum_{i=1}^{N}\tilde\phi_i\tilde\phi_i^\top + (2\epsilon+\lambda)\cdot I\right)\\
\preceq & \left(\Sigma + (2\epsilon+\lambda)\cdot I\right)
\end{align}
where the last step applies Lemma \ref{lem:inv_cov} because $N(2\epsilon+\lambda)\geq \Omega(d\log(N/\delta))$ due to the definition of $\lambda$ and $\epsilon\geq 0$.

Next, we show that Algorithm \ref{alg:pess_greedy} achieves the desired optimality gap. By Lemma \ref{lem:pess_opt}, we have
\begin{align}
\subopt(\hat\pi)&\leq 2H\EE_{\pi^*}[\Gamma(s,a)]\\
&\leq 
\sqrt{c_2(\delta)}\cdot\left(\frac{\gamma H\poly(d)}{\sqrt{N}}+(H\gamma+2H^2\sqrt{d})\sqrt{\epsilon}+H^2\sqrt{d\lambda}\right)\EE_{\pi^*}\left[\|\phi(s,a)\|_{\Lambda^{-1}}\right]
\end{align}
Focusing on the last term, applying Lemma \ref{lem:inv_cov} again, we have 
\begin{align}
    \EE_{d^*}[\|\phi(s,a)\|_{\Lambda^{-1}}]\leq & \EE_{d^*}[\|\phi(s,a)\|_{(\frac{1}{5}(\Sigma+\lambda I))^{-1}}]\\
    =&\EE_{d^*}\left[\sqrt{\phi^\top(\frac{1}{5}(\Sigma+\lambda I))^{-1}\phi}\right]\\
	\leq& \sqrt{\EE_{d^*}[\phi^\top(\frac{1}{5}(\Sigma+\lambda I))^{-1}\phi]}\\
	\leq& \sqrt{tr\left(\Sigma_*(\frac{1}{5}(\Sigma+\lambda I))^{-1}\right)}\\
	\leq& \sqrt{\kappa tr\left(\Sigma(\frac{1}{5}(\Sigma+\lambda I))^{-1}\right)}\\
	\leq& \sqrt{5\kappa \sum_{i=1}^{d}\frac{\sigma_i}{\sigma_i+\lambda}}\label{eq:next}\\
	\leq& \sqrt{5d\kappa}
\end{align}
Combining the two terms give the desired results.
\end{proof}

\section{Proof of uncorrupted learning results}
In this section, we prove the conclusion in Corollary \ref{thm:unif_cov_clean} and \ref{thm:rcn_clean}. The proof follows closely the classic analysis of Least Squared Value Iteration (LSVI) methods with the only difference being the data splitting which allows us to ditch the covering argument and obtain a tighter bound. Such a trick is only possible in the offline setting where the data are assumed to be i.i.d. For completeness, we specify the uncorrupted algorithm in Alg. \ref{alg:pess_greedy_clean}.
\begin{algorithm}[H]
	\caption{Uncorrupted Least-Square Value Iteration (R-LSVI)}\label{alg:pess_greedy_clean}
	\begin{algorithmic}[1]
		\STATE Input: Dataset $D=\{(s_i,a_i,r_i, s_i')\}_{1:N}$; pessimism bonus $\Gamma_h(s,a)\geq 0$, $\lambda > 0$.
		\STATE Split the dataset randomly into $H$ subset: $D_h = \{(s^h_i,a^h_i,r^h_i, s^{\prime h}_i)\}_{1:(N/H)}$, for $h\in[H]$.
		\STATE Initialization: Set $\hat{V}_{H+1}(s) \leftarrow 0$.
		\FOR{step $h=H,H-1,\ldots,1$}
		\STATE Set $\Lambda_h \leftarrow \frac{H}{N}\sum_{i=1}^{N/H} \phi(s_i^h,a_i^h)  \phi(s_i^h,a_i^h) ^\top + \lambda\cdot I$. 
        \STATE Set $\hat{w}_h\leftarrow  \Lambda_h ^{-1}( \frac{H}{N}\sum_{i=1}^{N/H} \phi(s_i^h,a_i^h) \cdot (r^h_i + \hat{V}_{h+1}(s^{h+1}_i)) ) $.
		\STATE Set $\hat{Q}_h(s,a) \leftarrow  \phi(s,a)^\top \hat{w}_h - \Gamma_h(s,a)$, clipped within $[0,H-h+1]$.
		\STATE Set $\hat{\pi}_h (a | s) \leftarrow \argmax_{a}\hat{Q}_h(s, a)$ and $\hat{V}_h(s)\leftarrow \max_{a}\hat{Q}_h(s, a)$.
		\ENDFOR 
		\STATE Output: $\{\hat{\pi}_h\}_{h=1}^H$.
	\end{algorithmic}
\end{algorithm}
We first prove the following lemma:
\begin{lemma}[Bound on the Bellman Error]\label{lem:Bellman_bound_clean}
	Under assumption \ref{ass:linearMDP}, given a dataset of size $N$, Algorithm \ref{alg:pess_greedy} achieves
	\begin{align}\label{eq:bound1}
		|(\B_h\hat V_{h+1})(s,a)-\hat Q_h(s,a)|\leq H\left(\sqrt{d\cdot\lambda}+\sqrt{\frac{Hd\log(N/\delta\lambda)}{N}}\right)\cdot \sqrt{\phi(x,a)^\top \Lambda_h^{-1}\phi(x,a)}\nonumber
	\end{align}
	for all $(s,a,h)\in \S\times\A\times[H]$, with probability at least $1-\delta$.
\end{lemma}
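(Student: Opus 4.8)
The plan is to bound the Bellman error $|(\Bb_h \hat V_{h+1})(s,a) - \hat Q_h(s,a)|$ by decomposing it into a ``bias'' term coming from ridge regularization and a ``variance'' term coming from the noise in the regression targets. Writing $\hat w_h = \Lambda_h^{-1}(\frac{H}{N}\sum_i \phi_i^h (r_i^h + \hat V_{h+1}(s_i^{\prime h})))$ and $w_h^* = \theta_h + \int_\S \hat V_{h+1}(s')\mu_h(s')ds'$ (the best linear predictor from Lemma \ref{lem:perf_diff}), we have $(\Bb_h\hat V_{h+1})(s,a) = \phi(s,a)^\top w_h^*$ and, before clipping, $\hat Q_h(s,a) + \Gamma_h(s,a) = \phi(s,a)^\top \hat w_h$. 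Since clipping to $[0,H-h+1]$ only brings $\hat Q_h$ closer to the true $(\Bb_h\hat V_{h+1})(s,a)$ (which lies in that interval), it suffices to bound $|\phi(s,a)^\top(\hat w_h - w_h^*)|$. By Cauchy--Schwarz in the $\Lambda_h^{-1}$ norm, $|\phi(s,a)^\top(\hat w_h - w_h^*)| \le \|\phi(s,a)\|_{\Lambda_h^{-1}}\|\hat w_h - w_h^*\|_{\Lambda_h}$, so the whole task reduces to showing $\|\hat w_h - w_h^*\|_{\Lambda_h} \le H(\sqrt{d\lambda} + \sqrt{Hd\log(N/\delta\lambda)/N})$ with high probability.

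Next I would expand $\hat w_h - w_h^*$. Using $\Lambda_h = \frac{H}{N}\sum_i \phi_i^h(\phi_i^h)^\top + \lambda I$ and the linear-MDP identity $\EE[r_i^h + \hat V_{h+1}(s_i^{\prime h}) \mid s_i^h, a_i^h] = (\phi_i^h)^\top w_h^*$, a standard computation gives
\begin{align}
\hat w_h - w_h^* = -\lambda \Lambda_h^{-1} w_h^* + \Lambda_h^{-1}\left(\frac{H}{N}\sum_{i=1}^{N/H} \phi_i^h \cdot \eta_i^h\right),\nonumber
\end{align}
where $\eta_i^h := (r_i^h + \hat V_{h+1}(s_i^{\prime h})) - (\phi_i^h)^\top w_h^*$ is zero-mean noise conditioned on $(s_i^h,a_i^h)$, bounded by $O(H)$ (and $(\sigma+H/2)^2$-variance-bounded by Lemma \ref{lem:y_var}). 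For the bias term, $\|\lambda \Lambda_h^{-1} w_h^*\|_{\Lambda_h} = \lambda \|w_h^*\|_{\Lambda_h^{-1}} \le \sqrt{\lambda}\|w_h^*\| \le \sqrt{\lambda}\cdot H\sqrt{d}$ using $\Lambda_h^{-1} \preceq \lambda^{-1}I$ and Lemma \ref{lem:w_norm}, which yields the $H\sqrt{d\lambda}$ term. For the variance term, $\|\Lambda_h^{-1}(\frac{H}{N}\sum_i \phi_i^h\eta_i^h)\|_{\Lambda_h}^2 = (\frac{H}{N}\sum_i \phi_i^h \eta_i^h)^\top \Lambda_h^{-1}(\frac{H}{N}\sum_i \phi_i^h\eta_i^h)$, and this is where the self-normalized/data-splitting structure enters.

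The key point that makes the clean analysis tight --- and the main obstacle to handle carefully --- is that because of the dataset splitting in step 2 of the algorithm, $\hat V_{h+1}$ depends only on $D_{h+1}, \dots, D_H$, hence is \emph{independent} of $D_h$. Therefore $\eta_i^h$ are genuinely i.i.d.\ zero-mean bounded random variables given the fixed function $\hat V_{h+1}$ and the covariates $\{(s_i^h,a_i^h)\}$, and we do not need a union bound over an $\epsilon$-net of all possible value functions (the covering argument used in \citep{jin2020pessimism} that costs an extra $\sqrt d$). Conditioning on $\hat V_{h+1}$, I would apply a Hoeffding/Bernstein-type concentration (or the standard self-normalized bound of Abbasi-Yadkori et al.) to get $\|\frac{H}{N}\sum_i \phi_i^h\eta_i^h\|_{\Lambda_h^{-1}}^2 \lesssim H^2 \cdot \frac{H}{N}\cdot d\log(N/\delta\lambda)$ with probability $1-\delta/H$, using $\|\phi\|\le 1$ and $\log\det(\Lambda_h)/\det(\lambda I) \le d\log(1 + \frac{1}{\lambda})$ for the metric-entropy factor. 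Taking a square root gives the $H\sqrt{Hd\log(N/\delta\lambda)/N}$ term. Finally I would union-bound over $h\in[H]$ (replacing $\delta$ by $\delta/H$, absorbed into the $\log$) and over $(s,a)$ --- the latter is free since the bound is already stated pointwise in terms of $\|\phi(s,a)\|_{\Lambda_h^{-1}}$ and no further net over $(s,a)$ is needed. Combining the bias and variance bounds via the triangle inequality in the $\Lambda_h$-norm completes the proof. The one place requiring care is making the independence argument precise: one must condition on the \emph{entire} collection $D_{h+1:H}$ (which determines $\hat V_{h+1}$) before invoking concentration over the fresh randomness in $D_h$, and verify that $\hat w_h$ as defined is measurable with respect to $D_h$ together with that conditioning.
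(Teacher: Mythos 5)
Your proposal is correct and follows essentially the same route as the paper: the same split into a ridge-bias term (bounded by $\lambda\|w_h^*\|_{\Lambda_h^{-1}}\le H\sqrt{d\lambda}$ via Lemma \ref{lem:w_norm}) plus a noise term handled by the self-normalized concentration of Lemma \ref{lem:concen_self_normalized}, with the data-splitting independence of $\hat V_{h+1}$ from $D_h$ replacing the covering argument. Your additional remarks on clipping and on conditioning on $D_{h+1:H}$ are just explicit versions of steps the paper leaves implicit.
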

\begin{proof}
We start by applying the following decomposition
\begin{align}
&(\B_h\hat V_{h+1})(s,a)-\hat Q_h(s,a)\\
=&(\B_h\hat V_{h+1})(s,a)-(\hat\B_h\hat V_{h+1})(s,a)\\
=&\underbrace{\phi(s,a)^\top w_h-\phi(s,a)^\top\Lambda_h^{-1}\left(\frac{H}{N}\sum_{i=1}^{N/H} \phi(s_i,a_i)\cdot(\B_h\hat V_{h+1})(s_i,a_i)\right)}_{\displaystyle\text{(i)}}-\\
&\underbrace{\phi(s,a)^\top\Lambda_h^{-1}\left(\frac{H}{N}\sum_{i=1}^{N/H} \phi(s_i,a_i)\cdot\left( r_i+\hat V_{h+1}(s'_i)-(\B_h\hat V_{h+1})(s_i,a_i)\right)\right)}_{\displaystyle\text{(ii)}}
\end{align}
Therefore, by triangle inequality we have
	\begin{equation}
		|(\B_h\hat V_{h+1})(s,a)-\hat Q_h(s,a)|
		\leq  |\text{(i)}|+|\text{(ii)}|
	\end{equation}
Then, we bound the two terms separately:
\begin{align*}
 |\text{(i)} | 
&= \left|\phi(s,a)^\top w_h-\phi(s,a)^\top\Lambda_h^{-1}\left(\frac{H}{N}\sum_{i=1}^{N/H} \phi(s_i,a_i)\cdot \phi(s_i,a_i)^\top w_h\right)\right| \\
&= \left| \phi(s,a)^\top w_h - \phi (s,a)^\top \Lambda_h^{-1}(\Lambda_h -  \lambda\cdot I)w_h \big|  = \lambda \cdot \big| \phi(s,a)^\top \Lambda_h^{-1} w_h   \right| \\
&  \leq \lambda \cdot  \|w_h \|_{ \Lambda_h^{-1}}\cdot  \|\phi(s,a) \|_{ \Lambda_h^{-1}} \leq H\sqrt{d \cdot \lambda } \cdot  \sqrt{\phi(s,a)^\top  \Lambda_h  ^{-1}\phi(s,a)}.
\end{align*}
For the second term, define
\begin{equation}
    \epsilon^h_i(V) = r_i^h + V(s^{h\prime}_i) - (\Bb_h V)(s_i^h,a_i^h)
\end{equation}
Then, we have
\begin{align}\label{eq:define_term3} 
  |\text{(ii)} | &= \left| \phi (s,a)^\top \Lambda_h^{-1} \left( \frac{H}{N}\sum_{i=1}^{N/H} \phi(s_i,a_i) \cdot \epsilon_i^h(\hat{V}_{h+1}) \right)    \right| \notag \\
  &\leq \underbrace{\Big\|  \frac{H}{N}\sum_{i=1}^{N/H} \phi(s_i,a_i) \cdot \epsilon_i^h(\hat{V}_{h+1}) \Big\|_{\Lambda_h^{-1}}}_{\displaystyle \text{(iii)} } \cdot \sqrt{\phi(x,a)^\top \Lambda_h^{-1}\phi(x,a)}.
\end{align}
From here, because of our data splitting, $\hat V_{h+1}$ is independent from $D_h$, and thus we can bypass the covering argument and directly apply matrix concentrations. In particular, by applying Lemma \ref{lem:concen_self_normalized}, we have that with probability at least $1-\delta$
\begin{align}
    \text{(iii)}\leq H\sqrt{\frac{Hd\log(1+N/H\lambda)+2H\log(1/\delta)}{N}}
\end{align}
Combining the two terms gives
\begin{equation}
    |(\B_h\hat V_{h+1})(s,a)-\hat Q_h(s,a)|\leq H\left(\sqrt{d\cdot\lambda}+\sqrt{\frac{Hd\log(N/\delta\lambda)}{N}}\right)\cdot \sqrt{\phi(x,a)^\top \Lambda_h^{-1}\phi(x,a)}
\end{equation}
\end{proof}
Now, given Lemma \ref{lem:Bellman_bound_clean}, applying Lemma \ref{lem:pess_opt}, we have
\begin{equation}
    \subopt(\hat\pi,\tilde\pi)\leq 2\sum_{h=1}^H\EE_{d^{\tilde\pi}}[\Gamma_h(s,a)]\leq 2H^2\left(\sqrt{d\cdot\lambda}+\sqrt{\frac{Hd\log(N/\delta\lambda)}{N}}\right)\cdot \EE_{d^{\tilde\pi}}[\sqrt{\phi(x,a)^\top \Lambda_h^{-1}\phi(x,a)}]
\end{equation}

The last step would be to bound $\EE_{d^{\tilde\pi}}[\sqrt{\phi(x,a)^\top \Lambda_h^{-1}\phi(x,a)}]$, similar to the last section.
In particular, applying Lemma \ref{lem:inv_cov}, we have
\begin{align}
    \EE_{d^{\tilde\pi}}\left[\sqrt{\phi(x,a)^\top \Lambda_h^{-1}\phi(x,a)}\right]\leq & \EE_{d^{\tilde\pi}}\left[\sqrt{3\phi(x,a)^\top (\Sigma+\lambda I)\phi(x,a)}\right]\label{eq:lambda}\\
    \leq & \sqrt{3\EE_{d^{\tilde\pi}}\left[\phi(x,a)^\top (\Sigma+\lambda \cdot I)\phi(x,a)\right]}\\
	\leq & \sqrt{3d\kappa}
\end{align}
where step \ref{eq:lambda} requires $\lambda\geq H\Omega(d\log(N/\delta))/N$.
Thus,
\begin{align}
    \subopt(\hat\pi,\tilde\pi)\leq & 2H^2\left(\sqrt{d\cdot\lambda}+\sqrt{Hd\log(N/\delta\lambda)}\right) \sqrt{\frac{3d\kappa}{N}}\\
    \leq & \tilde O\left(H^2\left(d\sqrt{\log(N/\delta)}+\sqrt{Hd\log(N/(d\delta))}\right) \sqrt{\frac{3d\kappa}{N}}\right)
\end{align}

\section{Lower-bound on best-of-both-world results}
\begin{proof}[\bf Proof of Theorem \ref{thm:agnostic_lb}]
Consider two instances of the offline RL problem, with two MDPs, $M$ and $M'$, both of which are actually simple two-arm bandit problems, along with their data generating distribution $\nu$ and $\nu'$, defined below.
\begin{enumerate}
    \item Instance 1: Bandit $M$ has $r_1 = \mbox{Bernoulli}(\frac{1}{2}+\frac{\epsilon}{2p})$ and $r_2 = \mbox{Bernoulli}(\frac{1}{2})$. The data generating distribution is $\nu(a_1) = p$ and $\nu(a_2) = 1-p$. The relative condition number is $1/p$.
    \item Instance 2: Bandit $M$ has $r_1 = \mbox{Bernoulli}(\frac{1}{2}-\frac{\epsilon}{2p})$ and $r_2 = \mbox{Bernoulli}(\frac{1}{2})$. The data generating distribution is $\nu(a_1) = p$ and $\nu(a_2) = 1-p$, same as instance 1. The relative condition number is $1/(1-p)$.
\end{enumerate}
Let $D$ and $D'$ be i.i.d. dataset of size N generated by instance 1 and 2 respectively, generated with the following \textit{coupling} process.
First, the actions are sampled from $\nu$ and shared across instances, e.g. $N_D(a_1) = N_{D'}(a_1)$ and $N_D(a_2) = N_{D'}(a_2)$.
Then, the rewards of $a_2$ are sampled from Bernoulli$(\frac{1}{2})$ and shared across tasks, e.g. $N_D(a_2,0) = N_{D'}(a_2,0)$ and $N_D(a_2,1) = N_{D'}(a_2,1)$.

Finally, let $X_i, Y_i$ be Bernoulli random variables s.t. 
	$X_i = 
	\begin{cases}
		0 & U \le \frac{1}{2}-\frac{\epsilon}{2p} \\
		1 & \mbox{o.w.}
	\end{cases}
	$,
	$Y_i = 
	\begin{cases}
		0 & U \le \frac{1}{2}+\frac{\epsilon}{2p} \\
		1 & \mbox{o.w.}
	\end{cases}
	$,
	where $U$ is picked uniformly random in $[0,1]$.
	Then $(X_i, Y_i)$ is a coupling with law:
	$
	P((X_i, Y_i) = (0, 0)) = \frac{1}{2}-\frac{\epsilon}{2p}
	$,
	$
	P((X_i, Y_i) = (1, 0)) = 0
	$,
	$
	P((X_i, Y_i) = (0, 1)) = \frac{\epsilon}{2p}
	$,
	$
	P((X_i, Y_i) = (s_3, s_3)) = \frac{1}{2}-\frac{\epsilon}{2p}
	$,
	$X_i$ and $Y_i$ can be thought as the outcome of $\mbox{Bernoulli}(\frac{1}{2}+\frac{\epsilon}{2p})$, $\mbox{Bernoulli}(\frac{1}{2}+\frac{\epsilon}{2p})$ respectively. Then, let the rewards of $a_1$ of the two instances be generated by $Y_i$ and $X_i$ respectively. We then have 
	\begin{align}
	    P(\sum_{i=1}^{N(a_1)} \ind{X_i\neq Y_i})\geq P(N(a_1)\leq p N)\cdot P(\sum_{i=1}^{pN} \ind{X_i\neq Y_i})
	    \geq \frac{1}{2}\cdot\frac{1}{2}=\frac{1}{4}
	\end{align} 
	In other word, with probability at least $\frac{1}{4}$, instance 1 and 2 are indistinguishable under $\epsilon$-contamination, in particular the adversary can replace at most $\epsilon N$ of $(a_1,0)$ with $(a_1,1)$ in $D'$ to replicate $D$. Therefore, instance 1 and (instance 2 + $\epsilon$-contamination) are with probability at least $1/4$ indistinguishable. Now, if an algorithm wants to achieve best of both world guarantee, it must return $a_1$ as the optimal arm with high probability when observing a dataset generated as above, in which case it will suffer a suboptimality of $\frac{\epsilon}{2p}$ if the data is generated by (instance 2 + $\epsilon$-contamination). As $p\geq\epsilon\geq0$ goes to $0$, this gap blows up, while the relative condition number $1/(1-p)$ remains bounded, thus contradiction.

\end{proof}

\section{Technical Lemmas}

\begin{lemma}[Concentration of Self-Normalized Processes \citep{abbasi2011improved}]
Let $\{\epsilon_t\}_{t=1}^\infty$ be a real-valued stochastic process that is adaptive to   a  filtration $\{\F_t\}_{t=0}^\infty$. That is, $\epsilon_t$ is $\F_t$-measurable for all $t\geq 1$.
Moreover, we assume that, for any $t\geq 1$, conditioning on $\F_{t-1}$, 
 $\epsilon_t $ is a  zero-mean and $\sigma$-subGaussian random variable such that  
\begin{equation}
    \label{eq:def_subgaussian} 
  \EE[\epsilon_t| \F_{t-1}]=0 \qquad \textrm{and}\qquad \EE[ \exp(\lambda \epsilon_t) | \F_{t-1}]\leq \exp(\lambda^2\sigma^2/2) , \quad \forall \lambda \in \R. 
\end{equation}
 Besides,
  let $\{\phi_t\}_{t=1}^\infty$ be an $\R^d$-valued stochastic process such that  $\phi_t $  is $\F_{t -1}$-measurable for all $ t\geq 1$. 
Let  $M_0 \in \RR^{d\times d}$ be a  deterministic and positive-definite matrix,  and  we define  $M_t = M_0 + \sum_{s=1}^t \phi_s\phi_s^\top$ for all $t\geq 1$. Then for any $\delta>0$, with probability at least $1-\delta$, we have for all $t\geq 1$ that
\begin{equation*}
\Big\| \sum_{s=1}^t \phi_s \cdot \epsilon_s \Big\|_{ M_t ^{-1}}^2 \leq 2\sigma^2\cdot  \log \bigg( \frac{\det(M_t)^{1/2}\det(M_0)^{- 1/2}}{\delta} \bigg) .
\end{equation*}
\label{lem:concen_self_normalized}
\end{lemma}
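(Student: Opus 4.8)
The plan is to reproduce the \emph{method of mixtures} (pseudo-maximization) argument of Abbasi-Yadkori et al.: I would first build, for each fixed direction $\lambda$, a scalar exponential supermartingale, then average these over a Gaussian prior on $\lambda$ to obtain a \emph{single} nonnegative supermartingale whose value is exactly the self-normalized quantity of interest, and finally apply a maximal inequality to get the uniform-in-$t$ tail bound. First I would fix $\lambda\in\R^d$ and define
\[
  M_t^\lambda = \exp\Big(\textstyle\sum_{s=1}^t\big[\lambda^\top \phi_s\epsilon_s - \tfrac12\sigma^2(\lambda^\top\phi_s)^2\big]\Big),\qquad M_0^\lambda=1.
\]
Since $\phi_t$ is $\F_{t-1}$-measurable and $\epsilon_t$ is conditionally zero-mean and $\sigma$-subGaussian, the defining MGF bound \eqref{eq:def_subgaussian} applied to the scalar $\lambda^\top\phi_t$ gives $\EE[\exp(\lambda^\top\phi_t\epsilon_t)\mid\F_{t-1}]\le \exp(\tfrac12\sigma^2(\lambda^\top\phi_t)^2)$. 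Pulling the $\F_{t-1}$-measurable factor $\exp(-\tfrac12\sigma^2(\lambda^\top\phi_t)^2)$ out of the conditional expectation then yields $\EE[M_t^\lambda\mid\F_{t-1}]\le M_{t-1}^\lambda$, so each $M_t^\lambda$ is a nonnegative supermartingale with $\EE[M_t^\lambda]\le 1$.

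Next I would mix over $\lambda$ against the density $h$ of $\mathcal N(0,\sigma^{-2}M_0^{-1})$, setting $\overline M_t=\int_{\R^d} M_t^\lambda\, h(\lambda)\,d\lambda$. Because all integrands are nonnegative, Tonelli's theorem lets me interchange the conditional expectation with the $\lambda$-integration, so $\overline M_t$ inherits the supermartingale property with $\overline M_0=1$. The inner exponent is quadratic in $\lambda$, and adding the prior's exponent $-\tfrac12\sigma^2\lambda^\top M_0\lambda$ completes $\sum_{s\le t}\phi_s\phi_s^\top=M_t-M_0$ into the full matrix $M_t$; completing the square and evaluating the resulting Gaussian integral (the $\sigma^d$ and $(2\pi)^{d/2}$ factors cancel against the prior's normalizer) gives the closed form
\[
  \overline M_t=\Big(\tfrac{\det M_0}{\det M_t}\Big)^{1/2}\exp\Big(\tfrac{1}{2\sigma^2}\big\|\textstyle\sum_{s=1}^t\phi_s\epsilon_s\big\|_{M_t^{-1}}^2\Big).
\]

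Finally I would invoke Ville's maximal inequality for the nonnegative supermartingale $\overline M_t$, giving $\Pb(\sup_{t\ge0}\overline M_t\ge 1/\delta)\le \delta\,\EE[\overline M_0]=\delta$. On the complementary event $\overline M_t<1/\delta$ holds for all $t$ simultaneously; substituting the closed form and taking logarithms produces exactly $\|\sum_{s\le t}\phi_s\epsilon_s\|_{M_t^{-1}}^2\le 2\sigma^2\log\big(\det(M_t)^{1/2}\det(M_0)^{-1/2}/\delta\big)$. I expect the main obstacle to be the two measure-theoretic points rather than any hard estimate: (i) justifying that $\overline M_t$ is still a supermartingale, which hinges on the Tonelli interchange and hence on nonnegativity of $M_t^\lambda$, and (ii) securing the \emph{uniform}-in-$t$ conclusion, which is precisely why a maximal (Ville) inequality is needed in place of a fixed-$t$ Markov bound. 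The Gaussian integral itself is routine but must be carried out carefully so that the $\det(M_0)^{1/2}$ normalizer lands in the right place.
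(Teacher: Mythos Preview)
Your proposal is correct and faithfully reproduces the method-of-mixtures argument from \citep{abbasi2011improved}: the exponential supermartingale for fixed $\lambda$, the Gaussian mixing to collapse to the self-normalized quantity, and Ville's maximal inequality for the uniform-in-$t$ conclusion are all the right ingredients, and your closed-form computation is accurate. The paper itself does not prove this lemma; it is simply quoted from the cited reference as a technical tool, so there is no in-paper proof to compare against---your sketch is exactly the original argument.
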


\begin{lemma}[Extended Value Difference \citep{cai2020provably}]
	Let $\pi = \{ \pi _h \}_{h =1}^H $ and $\pi' = \{ \pi_h' \}_{ h = 1}^H  $ be two arbitrary policies and let $\{ \hat Q_h \}_{h=1}^H $ be any  given  Q-functions. 
	For any $h \in [H]$, we define a value function $\hat V_h  \colon \S\rightarrow \R$  by letting $\hat V_h (x) = \langle \hat Q_h (x, \cdot ), \pi_h (\cdot | x ) \rangle_{\A}$ for all $s \in \S$. 
	 Then for all $s \in \S$, we have 
\begin{align}
	\hat{V}_1(s) - V_1^{\pi' }(s) &= \sum_{h=1}^H \EE_{\pi' }\left[ \big\langle \hat{Q}_h (s_h,\cdot) , \pi_h(\cdot | s_h) - \pi'_h(\cdot | s_h)\big\rangle_{\A } | s_1=s\right]\\
	&\qquad + \sum_{h=1}^H\EE_{\pi' }\left[     \hat{Q}_h (s_h,a_h)  - (\Bb_h \hat{V}_{h+1} )(s_h,a_h)  | s_1=s \right],
\end{align}
	where the expectation  $\EE_{\pi' } $ is taken with respect to the trajectory generated by $\pi'$, and $\Bb_h$ is the Bellman operator.
	\label{lem:value_diff}
\end{lemma}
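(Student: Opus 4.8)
The plan is to prove the identity by a single one-step decomposition followed by telescoping along trajectories generated by $\pi'$. First I would fix the initial state $s$ and, for an arbitrary state $s_h$ reached at stage $h$, analyze the per-step gap $\hat V_h(s_h) - V_h^{\pi'}(s_h)$. Using the definition $\hat V_h(s_h) = \langle \hat Q_h(s_h,\cdot), \pi_h(\cdot|s_h)\rangle_\A$ (built from $\pi$) together with the Bellman equation $V_h^{\pi'}(s_h) = \langle Q_h^{\pi'}(s_h,\cdot), \pi'_h(\cdot|s_h)\rangle_\A$, I would add and subtract $\langle \hat Q_h(s_h,\cdot), \pi'_h(\cdot|s_h)\rangle_\A$ to split the gap into a policy-mismatch term $\langle \hat Q_h(s_h,\cdot), \pi_h(\cdot|s_h) - \pi'_h(\cdot|s_h)\rangle_\A$ and a residual term $\langle \hat Q_h(s_h,\cdot) - Q_h^{\pi'}(s_h,\cdot), \pi'_h(\cdot|s_h)\rangle_\A$. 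The first term is already in the form of the first sum in the claim.

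The core of the argument is expanding the residual term. Using the Bellman equation $Q_h^{\pi'}(s_h,a_h) = (\Bb_h V_{h+1}^{\pi'})(s_h,a_h)$ and inserting $\pm (\Bb_h \hat V_{h+1})(s_h,a_h)$, I would write, for each action $a_h$, $\hat Q_h(s_h,a_h) - Q_h^{\pi'}(s_h,a_h) = \big[\hat Q_h(s_h,a_h) - (\Bb_h \hat V_{h+1})(s_h,a_h)\big] + \big[(\Bb_h \hat V_{h+1})(s_h,a_h) - (\Bb_h V_{h+1}^{\pi'})(s_h,a_h)\big]$. The crucial cancellation is that the reward term inside $\Bb_h$ is identical in $\Bb_h \hat V_{h+1}$ and $\Bb_h V_{h+1}^{\pi'}$, so the second bracket collapses to the pure next-stage value difference $\EE_{s_{h+1}\sim P(\cdot|s_h,a_h)}[\hat V_{h+1}(s_{h+1}) - V_{h+1}^{\pi'}(s_{h+1})]$ carried by the transition kernel, while the first bracket is exactly the stage-$h$ Bellman-error integrand appearing in the second sum of the claim.

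Taking expectations under the $\pi'$-trajectory and using the tower property, I would set $\delta_h := \EE_{\pi'}[\hat V_h(s_h) - V_h^{\pi'}(s_h) \mid s_1 = s]$ and assemble the pieces into the recursion $\delta_h = \EE_{\pi'}[\langle \hat Q_h(s_h,\cdot), \pi_h(\cdot|s_h) - \pi'_h(\cdot|s_h)\rangle_\A \mid s_1 = s] + \EE_{\pi'}[\hat Q_h(s_h,a_h) - (\Bb_h \hat V_{h+1})(s_h,a_h) \mid s_1 = s] + \delta_{h+1}$, where the last term absorbs the next-stage difference precisely because $a_h \sim \pi'_h(\cdot|s_h)$ and $s_{h+1}\sim P(\cdot|s_h,a_h)$ match the law of the $\pi'$-trajectory. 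Unrolling from $h=1$ to $H$ with boundary $\delta_{H+1} = 0$ (since $\hat V_{H+1}=V_{H+1}^{\pi'}=0$) collapses the telescoped value differences and yields the claimed two-sum identity, because $\delta_1 = \hat V_1(s) - V_1^{\pi'}(s)$.

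I expect the main obstacle to be bookkeeping rather than any deep difficulty: every expectation must be taken under $\pi'$ even though $\hat V_h$ is defined through $\pi$, and one must carefully justify pushing the stage-$(h{+}1)$ value difference inside the trajectory expectation via the tower property with respect to the filtration generated by $(s_1,a_1,\dots,s_h,a_h,s_{h+1})$. Some care is also needed at the boundary so that the convention $\hat V_{H+1}\equiv 0$ is compatible with $V_{H+1}^{\pi'}\equiv 0$, ensuring the telescope closes cleanly.
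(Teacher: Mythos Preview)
Your proposal is correct and is essentially the standard telescoping argument used to establish this identity. The paper does not include its own proof of this lemma; it simply cites the result from \cite{cai2020provably} without reproducing the argument, so there is nothing to compare against beyond noting that your approach matches the original derivation.
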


\begin{lemma}[Concentration of Covariances \citep{zanette2021cautiously}]\label{lem:inv_cov}
Let $\{\phi_i\}_{1:N}\subset \R^d$ be i.i.d. samples from an underlying bounded distribution $\nu$, with $\|\phi_i\|_i\leq 1$ and covariance $\Sigma$. Define
\begin{equation}
	\Lambda = \sum_{i=1}^N \phi_i\phi_i^\top+\lambda\cdot I
\end{equation}
for some $\lambda\geq \Omega(d\log(N/\delta))$. Then, we have that with probability at least $(1-\delta)$,
\begin{equation}
	\frac{1}{3}(N\Sigma+\lambda I)\preceq \Lambda \preceq \frac{5}{3}(N\Sigma+\lambda I)
\end{equation}
\end{lemma}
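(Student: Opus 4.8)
The plan is to reduce the two-sided PSD sandwich to a single spectral-norm deviation bound by whitening, and then apply the matrix Bernstein inequality. Write $\hat\Sigma_N \coloneqq \sum_{i=1}^N \phi_i\phi_i^\top$, so that $\Lambda = \hat\Sigma_N + \lambda I$, and let $B \coloneqq (N\Sigma + \lambda I)^{-1/2}$. Since conjugation by the positive-definite matrix $B$ preserves the Loewner order, the claimed inequality $\frac{1}{3}(N\Sigma+\lambda I)\preceq \Lambda \preceq \frac{5}{3}(N\Sigma+\lambda I)$ is equivalent to $\frac{1}{3}I \preceq B\Lambda B \preceq \frac{5}{3}I$. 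A direct computation gives $B\Lambda B = I + B(\hat\Sigma_N - N\Sigma)B$, so the whole statement collapses to the single bound $\|M\| \le \frac{2}{3}$, where $M \coloneqq B(\hat\Sigma_N - N\Sigma)B$; it is exactly the threshold $2/3$ that produces the constants $1/3$ and $5/3$.

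Next I would write $M$ as a sum of i.i.d.\ centered random matrices and bound its two Bernstein parameters. Setting $\psi_i \coloneqq B\phi_i$, we have $M = \sum_{i=1}^N W_i$ with $W_i \coloneqq \psi_i\psi_i^\top - \EE[\psi_i\psi_i^\top]$, since $\EE[\psi_i\psi_i^\top] = B\Sigma B$ and $\sum_i B\Sigma B = B(N\Sigma)B$. For the almost-sure norm bound, $\|\psi_i\|^2 = \phi_i^\top (N\Sigma + \lambda I)^{-1}\phi_i \le \|\phi_i\|^2/\lambda \le 1/\lambda$, and since $\|B\Sigma B\| = \|\EE[\psi_i\psi_i^\top]\| \le \EE[\|\psi_i\|^2] \le 1/\lambda$ as well, we get $\|W_i\| \le 2/\lambda \eqqcolon L$. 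For the variance, using $(\psi_i\psi_i^\top)^2 = \|\psi_i\|^2\,\psi_i\psi_i^\top \preceq \frac{1}{\lambda}\psi_i\psi_i^\top$ we obtain $\EE[W_i^2] \preceq \EE[(\psi_i\psi_i^\top)^2] \preceq \frac{1}{\lambda}B\Sigma B$; and since the eigenvalues of $B\Sigma B$ are $\sigma_j/(N\sigma_j+\lambda) \le 1/N$ (where the $\sigma_j$ are the eigenvalues of $\Sigma$), this yields $\big\|\sum_i \EE[W_i^2]\big\| \le \frac{N}{\lambda}\cdot\frac{1}{N} = \frac{1}{\lambda} \eqqcolon v$.

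Finally I would invoke matrix Bernstein, $\Pr[\|M\|\ge t] \le 2d\exp\!\big(\tfrac{-t^2/2}{\,v + Lt/3\,}\big)$. Plugging in $t = 2/3$, $v = 1/\lambda$, $L = 2/\lambda$ gives a tail of the form $2d\exp(-c\lambda)$ for an absolute constant $c$, which is at most $\delta$ once $\lambda \ge \Omega(\log(d/\delta))$; the hypothesis $\lambda \ge \Omega(d\log(N/\delta))$ is therefore comfortably sufficient. The step to get right is the variance calculation, in particular the eigenvalue bound $\sigma_j/(N\sigma_j+\lambda)\le 1/N$, which is what lets the factor $N$ from summing $N$ terms cancel and keeps $v$ of order $1/\lambda$ rather than order $N$; without it the bound would be vacuous. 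Everything else---the whitening reduction and the norm bound---is routine, and an intrinsic-dimension version of Bernstein could replace $d$ by an effective-rank quantity if one wished to sharpen the logarithmic factor, though that is unnecessary for the stated constants.
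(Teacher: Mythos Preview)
Your argument is correct. The whitening reduction to $\|B(\hat\Sigma_N-N\Sigma)B\|\le 2/3$, the bounds $L=2/\lambda$ and $v=1/\lambda$ via $\|\psi_i\|^2\le 1/\lambda$ and $\sigma_j/(N\sigma_j+\lambda)\le 1/N$, and the final matrix Bernstein step all check out; the resulting requirement $\lambda\gtrsim\log(d/\delta)$ is indeed dominated by the stated hypothesis $\lambda\ge\Omega(d\log(N/\delta))$ in any nontrivial regime.

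As for comparison: the paper does not actually prove this lemma. It simply cites \cite{zanette2021cautiously}, Lemma 32, and defers the details there. Your write-up therefore supplies what the paper omits. The approach you take---whiten by $(N\Sigma+\lambda I)^{-1/2}$ and apply matrix Bernstein---is the standard route and is essentially what the cited reference does as well, so there is no methodological divergence to discuss. One minor remark: in the almost-sure bound you control $\|B\Sigma B\|$ by $1/\lambda$ via Jensen, which is looser than the $1/N$ you later use for the variance; using $1/N$ in both places would give $L\le 1/\lambda+1/N$, but this makes no difference to the conclusion.
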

\begin{proof}
See \citep{zanette2021cautiously} Lemma 32 for detailed proof.
\end{proof}

\end{document}